%% file: ex_article.tex
\documentclass[hidelinks,onefignum,onetabnum]{siamart251216}

\input{ex_shared}

\ifpdf
\hypersetup{
  pdftitle={Coupled Optimal Transport with Landmark Constraints},
  pdfauthor={X. Gu, J. Sun, and Z. Xu}
}
\fi

\usepackage{amssymb}
\newcommand{\dif}{{\rm d}}

\begin{document}

\maketitle

\begin{abstract}
Existing optimal transport (OT) models primarily seek an OT map or plan between distributions by minimizing a prescribed transport cost or distortion. However, minimizing transport cost or distortion alone may fail to identify a geometrically meaningful transformation between the two distributions. To address this limitation, this paper proposes a novel coupled OT framework that leverages a small number of annotated landmarks to guide the recovery of an underlying deformation governing the distribution transformation. The coupled OT framework integrates the optimization of the transport plan and the deformation field into a unified model, where the landmark-guided deformation field and the cost-driven transport plan are coupled through a mutual-consistency constraint. As a result, the deformation is jointly determined by the annotated landmarks and cost-driven distribution matching. The proposed framework provides a principled connection between landmark-based registration and transport-based distribution matching, enabling the recovery of transport maps from sparse geometric supervision. We establish the well-definedness of the proposed model in a general variational setting and develop a finite-element-based numerical algorithm for computation whose convergence properties are systematically analyzed. The practical effectiveness of the proposed approach is verified in shape matching.

\end{abstract}

\begin{keywords}
optimal transport, variational model, landmark constraint, deformation field, transport-deformation coupling
\end{keywords}

\begin{MSCcodes}
49Q22, 65K10
\end{MSCcodes}

\section{Introduction}\label{sec:introduction}
Optimal Transport (OT), as a geometric tool for comparing or connecting distributions, has been successfully applied in diverse domains, such as astrophysics \cite{frisch2002reconstruction}, machine learning \cite{courty2016optimal,arjovsky2017wasserstein,lipman2023flow}, image science \cite{bauer2015diffeomorphic,solomon2015convolutional,gu2024adversarial}, and biology \cite{schiebinger2019optimal,klein2025mapping}.  OT aims to derive a transport map or plan between two probability distributions such that a certain transport cost is minimized. OT has two typical formulations, i.e., the Monge formulation \cite{monge1781memoire} and the Kantorovich formulation \cite{kantorovich1942translocation}. The Monge formulation optimizes among a set of continuous transport maps with a mass-preserving constraint, which is nonconvex and may be infeasible. The Kantorovich formulation relaxes the Monge formulation as a coupling or plan optimization problem and attracts broader studies in applications. The original Kantorovich problem is a linear program that is computationally expensive in its discrete version. The entropy-regularized OT \cite{cuturi2013sinkhorn} introduces the entropy of the transport plan as a regularization to the OT model, which is solved by the computationally cheaper Sinkhorn-Knopp algorithms \cite{cuturi2013sinkhorn,altschuler2017near,karlsson2017generalized}.
The original OT model has been extended in several aspects. To tackle the applications where only partial mass should be transported, partial OT \cite{guittet2002extended,figalli2010optimal,caffarelli2010free,chapel2020partial}, unbalanced OT \cite{liero2018optimal,chizat2018interpolating}, robust OT \cite{balaji2020robust,mukherjee2021outlier}, and supervised OT \cite{cang2022supervised} models have been proposed. On the other hand, to deal with the situations where the two measures are supported on different spaces and the transport cost is hard to define, Gromov-Wasserstein models \cite{sturm2006geometryI,memoli2011gromov} have been proposed to minimize the transport distortion defined on metrics within each space, respectively. 

The previous OT models mainly seek the transport map or plan under the criteria of transport cost or distortion minimization. The transport cost is often taken to be a distance, and the distortion is typically set as the difference between pairwise distances. However, such criteria may not faithfully characterize a geometrically meaningful distribution transformation. For example, in image or shape registration, two regions may be close but correspond to different semantic or anatomical parts. A transport map driven solely by distance minimization may therefore move mass along geometrically short but semantically incorrect paths, leading to mismatched correspondences and inaccurate descriptions of the true distribution deformation. A promising solution for this challenge is to leverage a small number of pre-annotated landmarks, which can typically be obtained at relatively low annotation cost. These landmarks provide sparse geometric information about the deformation between distributions and can guide a meaningful transport. Building on this idea, \cite{gu2022keypoint,gu2023keypoint} present a keypoint-guided OT model that introduces a mask-based constraint on the transport plan and a relation-preserving strategy to guide the matching of points, pioneering the study of landmark-driven OT. While inspiring, keypoint-guided OT targets the matching between discrete points and does not explicitly recover the deformation underlying the transition of distributions. This becomes restrictive in many applications such as image registration, shape analysis, and morphometric modeling, where one often seeks a deformation field corresponding to the distributional transition.

In this paper, we propose a novel coupled OT framework that utilizes sparse geometric landmarks to guide the recovery of the transformation between distributions. Our framework introduces a deformation-based view of transport: the transport plan should not only match the two distributions, but also be consistent with a coherent deformation suggested by the annotated landmarks. Under this spirit, we integrate the searching of the transport plan and the estimation of the deformation field into a unified coupled model. In this model, landmarks serve as local geometric anchors for the deformation field, which guides the transport behavior and is, in turn, constrained by the transport plan obtained through cost minimization. 
Consequently, the resulting transport is determined jointly by global cost minimization and a landmark-guided deformation that preserves local geometry.
Our framework provides a principled bridge between landmark-based registration and transport-based distribution matching, enabling the recovery of a geometrically meaningful transport guided by sparse landmark supervision. 
Following this model, extensive theoretical analysis is presented to establish its well-definedness and characterize its connection to conventional OT. We also devise a numerical algorithm based on the finite-element method to solve the model. The effectiveness of the proposed method is verified in the shape matching application.

\paragraph{Notations}
We use \(\mu_n\rightharpoonup\mu\) to denote narrow convergence for probability measures $\mu_n$ and $\mu$.  \(u_n\rightharpoonup u\) denotes weak convergence for functions $u_n$ and $u$ in the underlying function space and the strong convergence is denoted by \(u_n\to u\). 
\(L^q(\Omega;\mathbb R^p)\) denotes the Lebesgue space of \(\mathbb R^p\)-valued functions with respect to the Lebesgue measure on \(\Omega\), while \(L^q(\mu;\mathbb R^p)\) denotes the Lebesgue space with respect to the measure \(\mu\). \(H^1(\Omega;\mathbb R^p)\) denotes the first-order Sobolev space of \(\mathbb R^p\)-valued functions with square-integrable weak derivatives. The codomain \(\mathbb R\) is omitted for scalar-valued function spaces. For an element \(K\) of a Cartesian grid, \(\mathbb Q_1(K)\) denotes the space of affine functions on \(K\).
For a nonempty closed set \(\mathcal C\), its extended-valued indicator function \(\iota_{\mathcal C}\) is defined to be 0 on \(\mathcal C\) and \(+\infty\) outside \(\mathcal C\).
We denote by \(N_{\mathcal C}(x)\) the normal cone to \(\mathcal C\) at \(x\), by \(\partial\phi(x)\) the limiting subdifferential of an extended-real-valued function \(\phi\), and by
\(
\operatorname{dist}(x,\mathcal C) := \inf_{y\in\mathcal C}\|x-y\|
\)
the distance from \(x\) to \(\mathcal C\). For matrices, \(\langle\cdot,\cdot\rangle_F\) and \(\|\cdot\|_F\) denote the Frobenius inner product and norm, respectively. The vector of all ones in \(\mathbb R^n\) is denoted by \(\mathbf 1_n\).


\section{Coupled OT model with landmark constraints}
\label{sec:variational_ot_model} 
In this section, we first review the background of the classical OT model, then introduce our proposed coupled OT model with landmark constraints, followed by its mathematical properties. 

\subsection{Background}
Let \((\mathcal{X},d)\) be a Polish space, and let \(\mathcal P(\Omega)\) denote the space of Borel probability measures on \(\Omega\subset \mathcal{X}\). Given two probability measures \(\mu,\nu\in\mathcal P(\Omega)\), OT aims to find the most cost-efficient way of moving mass of \(\mu\) onto mass distributed according to \(\nu\). Let \(c:\Omega\times\Omega\to [0,+\infty]\) be a lower semicontinuous cost function, the Kantorovich formulation of the OT problem is defined as
\begin{equation}\label{eq:kp_ot}
    \inf_{\pi\in\Pi(\mu,\nu)}
\int c(x,y)\,{\rm d}\pi(x,y),
\end{equation}
where \(\Pi(\mu,\nu)\) denotes the set of all couplings between \(\mu\) and \(\nu\), namely
\[
\Pi(\mu,\nu)
:=
\left\{
\pi\in\mathcal P(\Omega\times\Omega):
\int\pi(x,y){\rm d}y = \mu(x), \int \pi(x,y){\rm d}x = \nu(y)
\right\}.
\]
Any minimizer \(\pi_\star\in\Pi(\mu,\nu)\), when it exists, is called an optimal transport plan. In the special case where \(c(x,y)=d^p(x,y)\), the above problem induces a proper metric in $\mathcal{P}(\Omega)$, namely the \(p\)-Wasserstein distance, given by
\[
W_p(\mu,\nu)
=
\left(
\inf_{\pi\in\Pi(\mu,\nu)}
\int d^p(x,y)\,{\rm d}\pi(x,y)
\right)^{1/p}.
\]
Thus, optimal transport provides both a variational principle for comparing probability measures and a geometric mechanism for identifying how mass should be matched or displaced between them.

\subsection{Coupled OT model}
In the classic OT model \eqref{eq:kp_ot}, the optimal transport plan is determined by the cost function. However, this may cause incorrect transport plans, as the cost function may not reflect the underlying geometrically meaningful deformation field governing the distributional variation. To address this issue, inspired by the idea of sparse-landmark supervision \cite{gu2022keypoint}, we leverage a small number of annotated landmarks to assist the cost minimization criterion for identifying the deformation field. Let $\mathcal S=\{(x_i,y_i)\}_{i=1}^m$ be the set of annotated landmark pairs, where $x_i$ and $y_i$ are observed sample points from $\mu$ and $\nu$, respectively, encoding sparse geometric information about the deformation field. The key challenge is how to integrate this sparse geometric information and the transport cost minimization prior into a principled model to recover the deformation field. We propose the coupled OT framework to tackle this challenge. Our basic idea is to use the landmarks to supervise a deformation field, together with a regularization energy that promotes spatial smoothness and propagates the sparse geometric information across the domain. Finally, the deformation field and the transport plan are coupled within a unified transport model, so that the cost-minimization prior and the sparse landmark supervision can jointly shape the deformation and the transport plan.

For simplicity and ease of exposition, we assume throughout this paper that $\Omega$ is a bounded Lipschitz domain in $\mathbb{R}^p$, while noting that the proposed framework can be naturally extended to more general Banach spaces. For the convenience of description, we denote the landmark displacement by
\[
\Delta_i^{\rm lm}=y_i-x_i,
\]
which provides an observation of the underlying deformation around a prescribed landmark. Let $u:\Omega\to\mathbb{R}^p$ denote the deformation field. The associated transport map is given by
\[
T_u={\rm Id}+u.
\]
In general, we do not require $u$ to be pointwise well-defined; for instance, $u$ may belong only to a Sobolev space. Therefore, landmark information should be interpreted through a bounded observation operator $A_r^i$. A typical choice of $A_r^i$ is the local averaging operator:
$$A_r^i(u)=
\frac{1}{|B_r(x_i)\cap\Omega|}
\int_{B_r(x_i)\cap\Omega}u(z)\dif z,
\text{ for } i=1,\ldots,m,
$$
where $|\cdot|$ is the Lebesgue measure, and $r>0$ is a small radius. 
When the deformation is sufficiently regular and $r$ is small, $A_r^i(u)$ approximates the local displacement around the landmark $x_i$.
We take the admissible deformation space as
$$
\mathcal{U}=\{u\in H^1(\Omega;\mathbb R^p): u=0 \text{ on } \Gamma_D\},
$$
where $\Gamma_D\subset\partial\Omega$ has positive boundary measure. This boundary condition is used to remove rigid-motion ambiguity. We note that other boundary conditions can also be adopted, provided that the corresponding null modes are properly controlled.

We now introduce the coupled OT model with landmark constraints:
\begin{equation}\label{eq:coupled_ot_model}
  \inf_{\pi\in\Pi(\mu,\nu),u\in \mathcal{U}}
\mathcal F(\pi,u),  
\end{equation}
where 
\begin{equation}
\begin{aligned}
\mathcal F(\pi,u) = &\int \left[ (1-\alpha) c(x,y) + \alpha h\left(T_u(x)-y\right)
\right] \dif\pi(x,y)\\
&+ \beta E_{\rm el}(u) + \sum_{i=1}^m w_i\rho\left(A_r^i(u)-\Delta_i^{\rm lm}\right).
\end{aligned}
\end{equation}
Here $\alpha\in(0,1)$, $\beta\geqslant0$ are tuning parameters, and $w_i>0$ are landmark weights. $c(x,y)$ is a lower semicontinuous cost function. The third term incorporates the supervision of sparse landmarks to the deformation field. The
function $\rho$ measures the discrepancy between the observation of the deformation field $A_r^i(u)$ and the true displacement based on landmarks $\Delta_i^{\rm lm}$. 
$\rho:\mathbb R^p\to[0,\infty]$ is a lower semicontinuous function satisfying
\begin{equation}\label{eq:rho_general}
    \rho(0)=0 \text{ and } \inf_{\|z\|_2>\epsilon}\rho(z)>0, \forall \epsilon > 0.
\end{equation} 
Typical choices of $\rho$ include the quadratic loss
$\rho(z)=\|z\|_2^2,$
or some robust loss functions, such as the Huber loss, for cases with noisy landmarks.

The second term is a regularization energy to promote the spatial coherence of the deformation field $u$. Here we use the linear elastic energy, but other regularizers, such as total variation, can also be applied. The linear elastic energy is defined by 
\begin{equation}
\label{eq:elastic}
  E_{\mathrm{el}}(u) = \frac{1}{2} \int \left( \|\varepsilon(u)\|_{F}^{2} +
      (\nabla\cdot u)^{2} \right) \dif x,  
\end{equation}
where
$ \varepsilon(u) = \frac{1}{2} \left( \nabla u + \nabla u^{\top} \right) $
is the linearized strain tensor, and $\nabla\cdot u$ is the divergence of the deformation field. The elastic energy $E_{\mathrm{el}}(u)$ describes the overall local deformation of $u$, including infinitesimal stretching, shearing, and volume change.
By controlling the linearized strain and the divergence of \(u\), this term penalizes local oscillations and promotes a spatially coherent deformation, facilitating the propagation of sparse landmark supervision across the domain.

The first integral term sits in the core part of the proposed model, where $h$ is a lower semicontinuous function for measuring the discrepancy, satisfying 
\begin{equation}\label{eq:h_general}
   h(0)=0 \mbox{ and } \inf_{\|z\|_2>\epsilon}h(z)>0 \text{ for any } \epsilon > 0. 
\end{equation}
Typically, $h$ could be taken as the $p$-squared norm
$h(z)=\|z\|_p^p.$
In this term, for each possibly matched pair \((x,y)\) indicated by the transport plan \(\pi\), we combine the classical transport cost \(c(x,y)\) with the deformation-consistency penalty \(h(T_u(x)-y)\). 
The former encourages mass to be transported along geometrically short paths, while the latter requires the transported target \(y\) to be consistent with the deformation map \(T_u\) applied to the source point \(x\). Therefore, the transport plan \(\pi\) is not only optimized to minimize transport cost but also guided by the landmarks by the consistency with $u$. Meanwhile, it provides a regularization to the optimization of $u$, which enforces $u$ to be consistent with the barycentric projection of $\pi$. 

The following proposition provides a clearer explanation of the coupled OT model specified to the squared Euclidean distance cost.

\begin{proposition}
    Assume $c(x,y)=\Vert x- y\Vert_2^2$ for any $x,y\in\Omega$, $\rho(z)=h(z)=\Vert z\Vert_2^2$ for any $z\in\mathbb{R}^p$.  Let $T_u^{\alpha}={\rm Id} + \alpha u$, then we have
    \begin{equation}
        \inf_{\pi\in\Pi(\mu,\nu),u\in \mathcal{U}} \mathcal F(\pi,u) = \inf_{u\in \mathcal{U}}\mathcal{J}(u)
    \end{equation}
    where 
    \begin{equation}
        \mathcal{J}(u) = W_2^2([T_{u}^{\alpha}]_{\#}\mu,\nu) + \alpha(1-\alpha)\Vert u \Vert^2_{L(\mu;\mathbb R^p)} + \beta E_{\rm el}(u) + \sum_{i=1}^m w_i\rho\left(A_r^i(u)-\Delta_i^{\rm lm}\right),
    \end{equation}
    and $T_{\#} \mu$ is the pushforward measure defined by $T_\# \mu(A)=\mu(T^{-1}(A))$ for any measurable set $A\subset \Omega$.
\end{proposition}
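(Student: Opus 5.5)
The plan is to fix $u\in\mathcal U$, compute the inner infimum over $\pi$, and show it equals $W_2^2([T_u^\alpha]_\#\mu,\nu)+\alpha(1-\alpha)\|u\|^2_{L^2(\mu;\mathbb R^p)}$. Taking the infimum over $u$ afterwards gives the claim. The terms $\beta E_{\rm el}(u)$ and $\sum_i w_i\rho(A_r^i(u)-\Delta_i^{\rm lm})$ do not depend on $\pi$ and simply pass through.

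\textbf{Pointwise identity.} For fixed $(x,y)$ and $u$, I expand the integrand:
\[
(1-\alpha)\|x-y\|_2^2+\alpha\|x+u(x)-y\|_2^2
=\|x-y\|_2^2+2\alpha\langle x-y,u(x)\rangle+\alpha\|u(x)\|_2^2 .
\]
Completing the square in $x-y$ turns this into
\[
\|x+\alpha u(x)-y\|_2^2+(\alpha-\alpha^2)\|u(x)\|_2^2
=\|T_u^\alpha(x)-y\|_2^2+\alpha(1-\alpha)\|u(x)\|_2^2 .
\]
I then integrate against $\pi\in\Pi(\mu,\nu)$. The second term depends only on $x$, so by the first-marginal condition it becomes $\alpha(1-\alpha)\int\|u\|_2^2\,\dif\mu$, a constant in $\pi$. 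This yields
\[
\inf_{\pi\in\Pi(\mu,\nu)}\int\|T_u^\alpha(x)-y\|_2^2\,\dif\pi+\alpha(1-\alpha)\|u\|^2_{L^2(\mu)} .
\]

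\textbf{Transport identity.} It remains to show
\[
\inf_{\pi\in\Pi(\mu,\nu)}\int\|T(x)-y\|^2\,\dif\pi=W_2^2(T_\#\mu,\nu), \qquad T=T_u^\alpha .
\]
For "$\geq$": each $\pi$ induces $(T\times{\rm Id})_\#\pi\in\Pi(T_\#\mu,\nu)$ with the same cost. For "$\leq$": given $\gamma\in\Pi(T_\#\mu,\nu)$, I use the gluing lemma. Glue $\gamma$ with the deterministic coupling $({\rm Id}\times T)_\#\mu\in\Pi(\mu,T_\#\mu)$ along the common marginal $T_\#\mu$. This gives a measure on $\Omega\times\mathbb R^p\times\Omega$ whose $(x,y)$-marginal $\pi$ lies in $\Pi(\mu,\nu)$ and satisfies $z=T(x)$ a.s. Hence $\int\|T(x)-y\|^2\,\dif\pi=\int\|z-y\|^2\,\dif\gamma$. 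Equivalently, I can disintegrate $\gamma$ with respect to its first marginal and set $\dif\pi(x,y)=\gamma_{T(x)}(\dif y)\,\dif\mu(x)$.

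\textbf{Main obstacle.} I expect the technical points to be measurability and integrability rather than the algebra. Because $u\in H^1$ is only an equivalence class, $u$ must be given a Borel representative, and the $\mu$-integrals must be meaningful. I would either assume $\mu\ll$ Lebesgue so that the choice of representative is irrelevant, or interpret $\|u\|_{L^2(\mu)}$ as possibly $+\infty$. When $\|u\|_{L^2(\mu)}=\infty$, both sides are $+\infty$ for that $u$, since the integrand dominates $\alpha(1-\alpha)\|u(x)\|^2$. The measure $T_\#\mu$ may be supported outside $\Omega$, so $W_2$ should be read on $\mathbb R^p$. When $\|u\|_{L^2(\mu)}<\infty$, $T_\#\mu$ has finite second moment and $W_2$ is finite, so the splitting of the integral into two terms is legitimate. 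Finally, I would note that the norm written $\|u\|_{L(\mu;\mathbb R^p)}$ in the statement is to be read as the $L^2(\mu;\mathbb R^p)$ norm.
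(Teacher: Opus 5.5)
Your proposal is correct and follows essentially the same route as the paper: complete the square pointwise, use the first-marginal condition to extract $\alpha(1-\alpha)\|u\|_{L^2(\mu;\mathbb R^p)}^2$, then prove the transport identity via the pushforward $(T_u^\alpha\times{\rm Id})_\#\pi$ for one inequality and the gluing lemma with $({\rm Id}\times T_u^\alpha)_\#\mu$ for the other. Your extra remarks go beyond the paper's argument without changing it. These are that $W_2$ should be read on $\mathbb R^p$, since $T_u^\alpha$ need not map into $\Omega$ and the paper tacitly glues on $\Omega^3$; that the norm is the $L^2(\mu)$ norm; and how the case $\|u\|_{L^2(\mu)}=\infty$ is handled.
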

\begin{proof}
First, 
\begin{equation*}
    \begin{split}
       (1-\alpha)c(x,y)+\alpha h(T_u(x)-y) = &(1-\alpha)\Vert x-y\Vert_2^2 + \alpha\Vert x+u(x)-y\Vert_2^2\\
       =&\Vert y-x -\alpha u(x)\Vert_2^2 +  \alpha(1-\alpha)\Vert u(x)\Vert_2^2\\
       =&\Vert y- T_u^{\alpha}\Vert_2^2 + \alpha(1-\alpha)\Vert u(x)\Vert_2^2.
    \end{split}
\end{equation*}
    Thus, \begin{equation*}
    \begin{split}
        & \inf_{\pi\in\Pi(\mu,\nu)} \int  \big[(1-\alpha)c(x,y)+\alpha h(T_u(x)-y)\big]\,\dif\pi \\
        & =  \inf_{\pi\in\Pi(\mu,\nu)} \int  \Vert y-T_u^\alpha(x)\Vert_2^2\,\dif\pi 
        + \alpha(1-\alpha)\Vert u\Vert_{L^2(\mu;{\mathbb {R}}^p)}^2. 
    \end{split}
    \end{equation*}
    For any $\pi\in\Pi(\mu,\nu)$, we define $\gamma_\pi=(T_u^\alpha,{\rm Id})_\#\pi$. 
    Then, we have $\gamma\in\Pi((T_u^\alpha)_\#\mu,\nu)$, and
    $$
    \int 
    \Vert y-T_u^\alpha(x)\Vert_2^2\,\dif\pi(x,y) = \int  \Vert y-z\Vert_2^2\,\dif\gamma_{\pi}(z,y).$$
    Therefore,
    $$
    \inf_{\pi\in\Pi(\mu,\nu)}
    \int 
    \Vert y-T_u^\alpha(x)\Vert_2^2\,\dif\pi(x,y)
    \geqslant
     W_2^2\big((T_u^\alpha)_\#\mu,\nu\big).
    $$
Conversely, for any $\gamma\in\Pi((T_u^\alpha)_\#\mu,\nu)$, set
\(
\eta=({\rm Id},T_u^\alpha)_\#\mu\in\Pi(\mu,(T_u^\alpha)_\#\mu).
\)
By the gluing lemma, there exists $\lambda_{\gamma}\in\mathcal P(\Omega^3)$, with coordinates $(x,z,y)$, such that
\(
(P_{1,2})_\#\lambda_{\gamma}=\eta, (P_{2,3})_\#\lambda_{\gamma}=\gamma.
\)
Here, $P_{i,j}$ denotes the projection from $\Omega^3$ onto its $i$-th and $j$-th coordinates for $i,j=1,2,3$.
Denote $\pi_{\gamma}=(P_{1,3})_\#\lambda_{\gamma}$. Then we have $\pi_{\gamma}\in\Pi(\mu,\nu)$. Since $\eta$ is supported on the graph of $T_u^\alpha$, we have $z=T_u^\alpha(x)$ for $\lambda_\gamma$-a.e. $(x,z,y)$. Hence
\[
\int \|z-y\|_2^2\,\dif\gamma(z,y)
=
\int \|T_u^\alpha(x)-y\|_2^2\,\dif\lambda_{\gamma}(x,z,y)
=
\int \|T_u^\alpha(x)-y\|_2^2\,\dif\pi_{\gamma}(x,y).
\]
Therefore,
\[
\inf_{\pi\in\Pi(\mu,\nu)}
\int \|T_u^\alpha(x)-y\|_2^2\,\dif\pi(x,y)
\leqslant
\int \|z-y\|_2^2\,\dif\gamma(z,y).
\]
Taking the infimum over $\gamma\in\Pi((T_u^\alpha)_\#\mu,\nu)$ gives
\[
\inf_{\pi\in\Pi(\mu,\nu)}
\int \|T_u^\alpha(x)-y\|_2^2\,\dif\pi(x,y)
\leqslant
W_2^2((T_u^\alpha)_\#\mu,\nu).
\] 
Therefore, the equality holds. Since the remaining terms in \(\mathcal F(\pi,u)\) are independent of the transport plan \(\pi\), adding them to both sides gives the asserted result. 
\end{proof}

Proposition 2.1 provides an interpretation of coupled OT. Under the squared $L_2$-cost, the resulting Wasserstein term measures the discrepancy between the target measure $\nu$ and the relaxed pushforward measure \((T_u^\alpha)_\#\mu\), while the additional term \(\alpha(1-\alpha)\|u\|_{L^2(\mu)}^2\) penalizes large displacements. Together with the other terms, the deformation field is optimized through a balanced variational principle: the Wasserstein term provides global distributional guidance, the displacement penalty controls the magnitude of the deformation, the elastic energy promotes spatial coherence, and the landmark loss anchors the deformation at reliable locations. This shows that the coupled OT model turns sparse landmark observations into a dense deformation field by combining distributional alignment with geometric regularization.

\subsection{Theoretical properties}
We first discuss the existence of the solution to the coupled OT model, and then discuss the model behavior when the balancing parameter $\alpha\to 0$ and $\alpha\to 1$. 
\begin{theorem}\label{thm:exsitence}
    Assume that \(\mu\) is absolutely continuous and satisfies $\frac{\dif\mu}{\dif x}\in L^\infty(\Omega)$, and $\beta>0$. Let $\rho$ and $h$ are defined 
    in \eqref{eq:rho_general} and \eqref{eq:h_general}. 
    Suppose there exists \(c_{\rm el}>0\) such that $ E_{\rm el}(u) \geqslant c_{\rm el}\|u\|_{H^1(\Omega;\mathbb R^{p})}^2 \mbox{ for any } u\in \mathcal{U}$, $A_r^i$ is a linear bounded operator, and \(\mathcal F\) is not identically \(+\infty\) on \(\Pi(\mu,\nu)\times\mathcal U\). Then, there exists $(\pi_\star,u_\star)\in\Pi(\mu,\nu)\times\mathcal U$, such that
    $$
    \mathcal F(\pi_\star,u_\star) = \inf_{\pi\in\Pi(\mu,\nu),\,u\in\mathcal U} \mathcal F(\pi,u).
    $$
\end{theorem}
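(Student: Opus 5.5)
We argue by the direct method of the calculus of variations. Because \(\mathcal F\) is not identically \(+\infty\), the infimum \(m_\star\) is finite; it is also nonnegative, since \(c,h,\rho\geqslant 0\) and \(E_{\rm el}\geqslant 0\). Take a minimizing sequence \((\pi_n,u_n)\in\Pi(\mu,\nu)\times\mathcal U\) with \(\mathcal F(\pi_n,u_n)\to m_\star\). Every term is nonnegative, so \(\beta E_{\rm el}(u_n)\) is bounded. The coercivity assumption \(E_{\rm el}(u)\geqslant c_{\rm el}\|u\|_{H^1}^2\), together with \(\beta>0\), then bounds \(u_n\) in \(H^1(\Omega;\mathbb R^p)\).

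\textbf{Compactness.} \(\mathcal U\) is a closed subspace of \(H^1\): it is convex and strongly closed because the trace operator is continuous. It is therefore weakly closed, so a subsequence satisfies \(u_n\rightharpoonup u_\star\) in \(H^1\) with \(u_\star\in\mathcal U\). Since \(\Omega\) is a bounded Lipschitz domain, Rellich--Kondrachov gives \(u_n\to u_\star\) strongly in \(L^2(\Omega;\mathbb R^p)\). Passing to a further subsequence, \(u_n\to u_\star\) Lebesgue-a.e. On the transport side, \(\Pi(\mu,\nu)\) is tight because its marginals are fixed, and it is narrowly closed. Prokhorov's theorem then gives, along a further subsequence, \(\pi_n\rightharpoonup\pi_\star\in\Pi(\mu,\nu)\).

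\textbf{Lower semicontinuity, term by term.}
\begin{enumerate}
\item Since \(u\mapsto\|\varepsilon(u)\|^2+(\nabla\cdot u)^2\) is a continuous convex quadratic form on \(H^1\), \(E_{\rm el}\) is weakly lower semicontinuous.
\item Each \(A_r^i\) is linear and bounded, hence weak-to-weak continuous into \(\mathbb R^p\), where weak and strong convergence coincide. Thus \(A_r^i(u_n)\to A_r^i(u_\star)\), and lower semicontinuity of \(\rho\) gives \(\liminf\rho(A_r^i(u_n)-\Delta_i^{\rm lm})\geqslant\rho(A_r^i(u_\star)-\Delta_i^{\rm lm})\).
\item Since \(c\) is lower semicontinuous and bounded below, \(\pi\mapsto\int c\,\dif\pi\) is narrowly lower semicontinuous.
\end{enumerate}
The one remaining difficulty is the coupling term \(\int h(x+u_n(x)-y)\,\dif\pi_n\). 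Here the integrand and the measure vary simultaneously, and \(u_n\) is only an \(L^2\) function, not continuous. So the standard lower semicontinuity result cannot be applied directly.

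\textbf{Main obstacle: the coupling term.} We use \(\dif\mu/\dif x\in L^\infty\). Since \(\Omega\) is bounded, strong \(L^2(\Omega)\) convergence implies \(u_n\to u_\star\) in \(L^1(\mu)\), and hence in \(\mu\)-measure. We first approximate \(u_\star\) by a continuous map. By Lusin's theorem, for each \(\delta>0\) there is a continuous \(g:\overline\Omega\to\mathbb R^p\) with \(\mu(\{g\neq u_\star\})<\delta\). By Egorov's theorem there is a set \(K\) with \(\mu(\Omega\setminus K)<\delta\) on which \(u_n\to u_\star\) uniformly. Following the strategy of Ambrosio--Gigli--Savaré for lower semicontinuity of \(\int f(x,y)\,\dif\pi\) with fixed first marginal, set \(\Phi_n(x,y)=(x,x+u_n(x)-y)\). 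We aim to show that \((\Phi_n)_\#\pi_n\rightharpoonup(\Phi_\star)_\#\pi_\star\); the claim then follows because \(h\) is lower semicontinuous and nonnegative, via the portmanteau theorem.

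To prove this convergence, take a bounded Lipschitz test function \(\varphi\) and compare \(\int\varphi\circ\Phi_n\,\dif\pi_n\) with \(\int\varphi(x,x+g(x)-y)\,\dif\pi_n\). The error is at most \(\mathrm{Lip}(\varphi)\int\min(|u_n-g|,\cdot)\,\dif\mu\) plus \(2\|\varphi\|_\infty\mu(\{u_n\ \text{far from}\ g\})\). This error depends only on the first marginal \(\mu\) and tends to \(O(\delta)\). The integrand \(\varphi(x,x+g(x)-y)\) is bounded and continuous, so narrow convergence of \(\pi_n\) handles the middle comparison. Finally, comparing the limit with \(\int\varphi\circ\Phi_\star\,\dif\pi_\star\) in the same way and letting \(\delta\to0\) gives the result.

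Summing the liminf inequalities yields \(\mathcal F(\pi_\star,u_\star)\leqslant\liminf\mathcal F(\pi_n,u_n)=m_\star\), so \((\pi_\star,u_\star)\) is a minimizer.
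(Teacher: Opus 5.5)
Your proposal is correct and follows essentially the same route as the paper. Both arguments use the direct method with $H^1$ coercivity, then Rellich--Kondrachov together with the bounded density of $\mu$ to get $u_n\to u_\star$ in $L^1(\mu)$, and Prokhorov for $\pi_n$. Both then prove narrow convergence of the pushforwards $(\Phi_n)_\#\pi_n$ by testing against bounded Lipschitz functions, replace $u_\star$ by a continuous map via Lusin--Tietze, and conclude with the portmanteau theorem for $h$.

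The differences are cosmetic. You pass from $u_n$ to $g$ in one step, and your Egorov and a.e.\ convergence remarks are not needed. You also justify $u_\star\in\mathcal U$ properly by weak closedness of $\mathcal U$, which the paper glosses over.
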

\begin{proof}
Let \(\{(\pi_n,u_n)\}_{n\geqslant 1}\subset\Pi(\mu,\nu)\times\mathcal U\) be a minimizing
sequence to $\mathcal F$.
Then, up to discarding finitely many terms, there exists \(C>0\) such
that $\mathcal F(\pi_n,u_n)\leqslant C $. Since all terms in \(\mathcal F\) are nonnegative, the coercivity of \(E_{\rm el}\) gives
\[
\beta E_{\rm el}(u_n)\leqslant \mathcal F(\pi_n,u_n)\leqslant C,
\text{ and then }
\|u_n\|_{H^1(\Omega;\mathbb R^{p})}^2
\leqslant \frac{C}{\beta c_{\rm el}}.
\]
Thus \(\{u_n\}\) is bounded in \(\mathcal{U}\). Since $\mathcal{U}$ is bounded in $H^1(\Omega;\mathbb R^p)$, there exist a subsequence, still denoted by \(\{u_n\}\), and \(u_\star\in \mathcal{U}\) such that
$ u_n\rightharpoonup u_\star \text{ weakly in }H^1(\Omega;\mathbb R^p). $ Moreover, by the Rellich--Kondrachov compactness theorem,
$ u_n\to u_\star$ strongly in $L^2(\Omega;\mathbb R^p)$.
Using \(\mu\ll dx\) and \(\rho_\mu=:\dif\mu/\dif x\in L^\infty(\Omega)\), we further have
\[
\|u_n-u_\star\|_{L^2(\mu;\mathbb R^p)}^2
=
\int  |u_n-u_\star|^2\rho_\mu\,\dif x
\leqslant
\|\rho_\mu\|_{L^\infty(\Omega)}
\|u_n-u_\star\|_{L^2(\Omega;\mathbb R^p)}^2
\to 0.
\]

Next, we extract a convergent subsequence of transport plans. Since \(\Omega\) is
Polish, \(\mu\) and \(\nu\) are tight.  Hence \(\{\pi_n\}\subset\Pi(\mu,\nu)\) is
tight in \(\mathcal P(\Omega\times\Omega)\). Indeed, for every \(\varepsilon>0\),
there exist compact sets \(K_1,K_2\subset\Omega\) such that
$
\mu(K_1)>1-{\varepsilon}/{2},
\nu(K_2)>1-{\varepsilon}/{2},
$
and therefore
\[
\pi_n((\Omega\times\Omega)\setminus(K_1\times K_2)) \leqslant \mu(\Omega \setminus K_1)+\nu(\Omega \setminus K_2)<\varepsilon.
\]
By Prokhorov's theorem, we have up to a subsequence,
$ \pi_n\rightharpoonup \pi_\star $ narrowly in $\mathcal P(\Omega\times\Omega)$.
The continuity of the marginal projections and  the continuity of pushforwards under narrow convergence indicates $\pi_\star$ has marginals $\mu,\nu$, and hence \(\pi_\star\in\Pi(\mu,\nu)\).

We now prove the lower semicontinuity of \(\mathcal F\). Since \(c\) is nonnegative and
lower semicontinuous, the Portmanteau theorem gives
\[
\int c(x,y)\,\dif\pi_\star
\leqslant
\liminf_{n\to\infty}
\int c(x,y)\,\dif\pi_n .
\]
It remains to handle the deformation-consistency term. Define $G_n(x,y):=x+u_n(x)-y$,
and $G_\star(x,y):=x+u_\star(x)-y$.
Since the first marginal of \(\pi_n\) is \(\mu\),
\[
\int \|G_n-G_\star\|_2^2\,\dif\pi_n
=
\int  \|u_n-u_\star\|_2^2\,\dif\mu
=
\|u_n-u_\star\|_{L^2(\mu;\mathbb R^p)}^2
\to0.
\]
Define mappings $\Phi_n(x,y)=(x,y,G_n(x,y))$ and $\Phi_\star(x,y)=(x,y,G_\star(x,y))$, then $[\Phi_n]_\#\pi_n \rightharpoonup [\Phi_\star]_\#\pi_\star $ narrowly in $\mathcal P(\Omega\times\Omega\times\mathbb R^p)$.
Specifically, since bounded Lipschitz functions determine narrow convergence on Polish spaces, it is enough to test against $\varphi\in BL_b(\Omega\times\Omega\times\mathbb R^p)$.
It can be written that
\begin{equation}\label{eq:narrow_converge}
\begin{split}
    &\int\varphi(x,y,G_n(x,y))\dif \pi_n - \int\varphi(x,y,G_\star(x,y))\dif\pi_\star =  \int \varphi(x,y,G_n(x,y))\dif\pi_n \\
    &-\int \varphi(x,y,G_\star(x,y))\dif \pi_n  + \int \varphi(x,y,G_\star(x,y))\dif \pi_n - \int\varphi(x,y,G_\star(x,y))\dif\pi_\star.
\end{split}
\end{equation}
By the Lipschitz continuity, 
$$ \left|\int \left[\varphi(x,y,G_n(x,y))-\varphi(x,y,G_\star(x,y))\right]\dif \pi_n\right|\leqslant \text{Lip}(\varphi)\int \|G_n-G\|_2\dif\pi_n\rightarrow 0.$$
We tackle the remaining terms in \eqref{eq:narrow_converge} using a Lusin approximation argument.
By Lusin's theorem, for every \(\delta>0\), there exists a compact set
\(K_\delta\subset\Omega\) such that \(\mu(\Omega\setminus K_\delta)<\delta\) and
\(u_\star\) is continuous on \(K_\delta\). By the Tietze extension theorem, there
exists \(v_\delta\in C_b(\Omega;\mathbb R^p)\) such that $v_\delta(x)=u_\star(x), x\in K_\delta$.
We define $\psi(x,y):=\varphi(x,y,G_\star(x,y))$ and $\psi_\delta(x,y):=\varphi(x,y,x+v_\delta(x)-y).$
Then \(\psi_\delta\in C_b(\Omega\times\Omega)\). By the narrow convergence \(\pi_n\rightharpoonup\pi_\star\), we have
\[
\int \psi_\delta\,\dif\pi_n \to \int \psi_\delta\,\dif\pi_\star .
\]
Moreover, since \(\psi=\psi_\delta\) on \(K_\delta\times\Omega\), and
\(\pi_n\) and \(\pi_\star\) have first marginal \(\mu\), we have
\[
\begin{aligned}
\left|
\int (\psi-\psi_\delta)\,\dif\pi_n
\right|
+
\left|
\int (\psi-\psi_\delta)\,\dif\pi_\star
\right|
\leqslant
4\|\varphi\|_\infty\mu(\Omega\setminus K_\delta)
\leqslant
4\|\varphi\|_\infty\delta .
\end{aligned}
\]
Therefore,
\[
\limsup_{n\to\infty}
\left|
\int \psi\,\dif\pi_n
-
\int \psi\,\dif\pi_\star
\right|
\leqslant
4\|\varphi\|_\infty\delta .
\]
Since \(\delta>0\) is arbitrary, we obtain
\[
\int 
\varphi(x,y,G_\star(x,y))\,\dif\pi_n
\to
\int 
\varphi(x,y,G_\star(x,y))\,\dif\pi_\star .
\]
This indicates that $[\Phi_n]_\#\pi_n \rightharpoonup [\Phi_\star]_\#\pi_\star $ narrowly. 
Since \(h\) is nonnegative and lower semicontinuous, the Portmanteau theorem gives
\[
\int 
h(x+u_\star(x)-y)\,\dif\pi_\star
\leqslant
\liminf_{n\to\infty}
\int 
h(x+u_n(x)-y)\,\dif\pi_n .
\]
By the weak lower semicontinuity of \(E_{\rm el}\), we have 
$E_{\rm el}(u_\star) \leqslant \liminf_{n\to\infty}E_{\rm el}(u_n).$
Finally, since each is bounded and linear, weak convergence in \(\mathcal{U}\) implies $A_r^i(u_n)\to A_r^i(u_\star)$ in $\mathbb R^p$. Using the lower semicontinuity of \(\rho\), we obtain
\[
\sum_{i=1}^m w_i\rho\big(A_r^i(u_\star)-\Delta_i^{\rm lm}\big)
\leqslant
\liminf_{n\to\infty}
\sum_{i=1}^m w_i\rho\big(A_r^i(u_n)-\Delta_i^{\rm lm}\big).
\]
Combining the above estimates yields
\begin{equation}\label{eq:minimizer}
   \mathcal F(\pi_\star,u_\star) \leqslant \liminf_{n\to\infty}\mathcal F(\pi_n,u_n). 
\end{equation}
Since $(\pi_n,u_n)$ is a minimizing sequence, the equality holds in \eqref{eq:minimizer}.
\end{proof}

In Theorem~\ref{thm:exsitence}, according to the definition of $\mathcal U$ and by setting $E_{\rm el}$ as the linear elastic energy, the condition that $ E_{\rm el}(u) \geqslant c_{\rm el}\|u\|_{H^1(\Omega;\mathbb R^p)}^2$ for all $u\in \mathcal{U}$ holds for some constant $c_{\rm el}>0$. 
Theorem~\ref{thm:exsitence} guarantees that under mild conditions, the solution of the coupled OT model exists, indicating the problem is well-defined. However, the minimizer may not be unique. Indeed, the transport plan may not be unique in the Kantorovich formulation, and the sparse landmark constraints may not determine a unique deformation field over the whole domain, under the considered conditions.  

In the coupled OT model, the parameter \(\alpha\in(0,1)\) plays a crucial role in balancing the transport cost and the landmark-guided deformation consistency. We next study the model behavior in the two limiting regimes with \(\alpha\to 1\) and \(\alpha\to 0\), respectively.
\begin{remark}\label{thm:prop_alpha_1}
    For any \(\alpha\in(0,1)\), let \(\mathcal F_\alpha\) denote the functional \(\mathcal F\) with the parameter \(\alpha\), and \((\pi_\alpha,u_\alpha)\) be a minimizer of \(\mathcal F_\alpha\). 
    Let $\{\alpha_n\}$ be a sequence such that $\alpha_n\to 1$ as $n\to \infty$. Then, under the conditions of Theorem~\ref{thm:exsitence}, up to a subsequence, $\pi_{\alpha_n}\rightharpoonup \pi^1$ narrowly $\Pi(\mu,\nu)$, and  $u_{\alpha_n}\to u^1$ strongly in $L^2(\mu;\mathbb{R}^p)$. Moreover, 
    \begin{equation}\label{eq:alpha_1}
        (\pi^1,u^1)\in \operatorname*{argmin}_{(\pi,u)\in\Pi(\mu,\nu)\times \mathcal{U}} \left\{R(\pi,u) + J(u)\right\},
    \end{equation}
    where $ J(u) := \beta E_{\rm el}(u) + \sum_{i=1}^m w_i\rho(A_r^i(u)-\Delta_i^{\rm lm}), R(\pi,u) := \int h(T_u(x)-y)\dif\pi(x,y).$ 
\end{remark}

By Remark \ref{thm:prop_alpha_1}, when \(\alpha\to1\), the coupled OT model is stable in the sense that every sequence of minimizers admits a subsequential converge to a solution of the deformation-dominated problem~\eqref{eq:alpha_1}.
This shows that the minimizers remain well-controlled as $\alpha\to 1$.

\begin{remark}\label{thm:prop_alpha_0}
    Let $C(\pi):= \int_{\Omega\times\Omega}c(x,y)\dif\pi(x,y)$, $\mathcal O :=\arg\min_{\pi\in\Pi(\mu,\nu)}C(\pi)$, and $\mathcal M:= \arg\min_{u\in\mathcal U}J(u)$. Then, under the conditions of Theorem~\ref{thm:exsitence}, for every sequence \(\alpha_n\to0\), up to a subsequence, there exists $(\pi^0,u^0)\in \mathcal{O}\times\mathcal{M}$, such that $\pi_{\alpha_n}\rightharpoonup \pi^0$ narrowly $\Pi(\mu,\nu)$, and  $u_{\alpha_n}\to u^0$ strongly in $L^2(\mu;\mathbb{R}^p)$. Moreover, 
    $$(\pi^0,u^0)\in\operatorname*{argmin}_{(\pi,u)\in\mathcal O\times\mathcal M}R(\pi,u).$$
\end{remark}

Remark \ref{thm:prop_alpha_0} shows that, as \(\alpha\to0\), the coupled OT model recovers a classical cost minimization optimal transport plan and an independent elastic-landmark
deformation field. In other words, the  model reduces to the decoupled problem
$$\min_{\pi\in\Pi(\mu,\nu),\,u\in\mathcal U} \left\{C(\pi)+J(u)\right\}.$$
This indicates that the deformation-consistency term \(R(\pi,u)\) does not explicitly affect the limiting energy as \(\alpha\to0\). Rather, it serves as a variational selection criterion among the decoupled minimizers, favoring those that minimize \(R(\pi,u)\).

\section{Computation}
\label{sec:algorithm}
This section discuss the computation of the solution to the coupled OT.
We first devise a variational splitting of the coupled problem at the level of continuous transport plans and Sobolev deformation fields, before introducing a finite-dimensional approximation. 
We then develop a finite-element realization of this principle and discuss the convergence of the resulting scheme.


\subsection{Variational splitting}
Since the coupled OT model~\eqref{eq:coupled_ot_model} involves the optimization of two variables, it is reasonable to perform an alternating update scheme. Specifically, for the $k$-th alternate, $\pi^{(k+1)}$ and $u^{(k+1)}$ is given by 
\begin{equation}
    \begin{split}
        \pi^{(k+1)} &\in \operatorname*{argmin}_{\pi\in\Pi(\mu,\nu)} \mathcal{F}(\pi,u^{(k)}) + \mathcal P_\Pi(\pi,\pi^{(k)}),\\
         u^{(k+1)} &\in \operatorname*{argmin}_{u\in\mathcal U} \mathcal{F}(\pi^{(k+1)},u)+ \mathcal P_{\mathcal U}(u,u^{(k)}),
    \end{split}
\end{equation}
where $\mathcal P_\Pi$ and $\mathcal P_{\mathcal U}$ are stabilization terms for the transport plan and deformation field, respectively. Typically, $\mathcal P_\Pi$ may be chosen as an entropic or Bregman-type regularization for the transport plan, while $\mathcal P_{\mathcal U}$ may be chosen as a quadratic proximal term in the deformation space.

The stabilization terms also provide a basic descent property. Assume that $\mathcal P_\Pi$ and $\mathcal P_{\mathcal U}$ are nonnegative and satisfy $\mathcal P_\Pi(\pi,\pi)=0$ and $\mathcal P_{\mathcal U}(u,u)=0$. Since $\pi^{(k)}$ and $u^{(k)}$ are admissible solutions in the two subproblems, the exact updates imply
\[
\mathcal F(\pi^{(k+1)},u^{(k+1)})+\mathcal P_\Pi(\pi^{(k+1)},\pi^{(k)})+\mathcal P_{\mathcal U}(u^{(k+1)},u^{(k)})\leqslant \mathcal F(\pi^{(k)},u^{(k)}).
\]
Therefore, the objective values $\{\mathcal F(\pi^{(k)},u^{(k)})\}_{k\geqslant 0}$ are nonincreasing. Since $\mathcal F$ is bounded from below under the assumptions of Theorem~\ref{thm:exsitence}, the objective values converge. This descent property, however, does not by itself imply the convergence of the whole sequence $\{(\pi^{(k)},u^{(k)})\}_{k\geqslant 0}$ in the function-space setting. The convergence of the iterates depends on the numerical realization scheme and the stabilization terms.

The above variational splitting is formulated at the function-space level and is not tied to a particular computational representation. In principle, the deformation field and the continuous transport map can be realized by finite elements, splines, radial basis functions, or neural parameterizations, together with a suitable discretization of the input measures. In this work, we adopt a finite-element realization for its compatibility with the Sobolev deformation space and the elastic regularization.

\subsection{Finite-element-based discretization}
\label{sec:discretization}
For the finite-element discretization, we further assume that $\Omega$ is a bounded polyhedral domain. The input measures and the deformation field are discretized separately. This allows the support points of the discrete measures to be independent of the nodes of the finite-element mesh, which is useful to accelerate computation when the measures are given by empirical samples or by quadrature rules.

We first approximate $\mu$ and $\nu$ by the weighted atomic measures
\begin{equation}
\mu_{N_\mu}
=
\sum_{k=1}^{N_\mu} a_k\delta_{\xi_k},
\;
\nu_{N_\nu}
=
\sum_{\ell=1}^{N_\nu} b_\ell\delta_{\eta_\ell},
\label{eq:discrete_measures}
\end{equation}
where $\xi_k,\eta_\ell\in\Omega$, $a_k>0$, $b_\ell>0$, and
\(
\sum_{k=1}^{N_\mu}a_k
=
\sum_{\ell=1}^{N_\nu}b_\ell
=
1.
\)
When \(\mu\) and \(\nu\) are empirical measures, their support points are given by the observed samples, and the corresponding weights are uniform.
For continuously distributed measures, the discrete approximation in \eqref{eq:discrete_measures} can also be constructed from a finite partition of $\Omega$. Specifically, a representative point is chosen from each cell as a support point, and its weight is set to the mass of that cell.
Let
\(
\mathbf a=(a_1,\ldots,a_{N_\mu})^\top,
\mathbf b=(b_1,\ldots,b_{N_\nu})^\top.
\)
A coupling between $\mu_{N_\mu}$ and $\nu_{N_\nu}$ is represented by a nonnegative matrix $\mathbf P\in\mathbb R_+^{N_\mu\times N_\nu}$. The corresponding discrete coupling polytope is
\begin{equation}
\Pi(\mathbf a,\mathbf b)
=
\left\{
\mathbf P\in\mathbb R_+^{N_\mu\times N_\nu}:
\mathbf P\mathbf 1_{N_\nu}=\mathbf a,\;
\mathbf P^\top\mathbf 1_{N_\mu}=\mathbf b
\right\}.
\label{eq:discrete_coupling_set}
\end{equation}
Each $\mathbf P\in\Pi(\mathbf a,\mathbf b)$ induces the discrete transport plan
$\pi_{\mathbf P}
=
\sum_{k=1}^{N_\mu}
\sum_{\ell=1}^{N_\nu}
P_{k\ell}\delta_{(\xi_k,\eta_\ell)}.
\label{eq:discrete_transport_plan}
$

We next approximate the deformation field by finite elements. Let
\(\mathcal T_\delta\) be a conforming finite-element mesh of \(\Omega\),
with mesh size \(\delta\). We consider Cartesian meshes equipped with continuous tensor-product \(\mathbb Q_1\) elements. We note that our approach is applicable to simplicial meshes.
We consider the scalar-valued piecewise affine finite-element space
\[
V_\delta =
\{
v_\delta\in C^0(\overline{\Omega}):
v_\delta|_K\in\mathbb Q_1(K)
\ \text{for every }K\in\mathcal T_\delta,
v_\delta=0 \text{ on }\Gamma_D
\}
\]
and define the vector-valued deformation space by
$\mathcal U_\delta=(V_\delta)^p$. Then we have
$\mathcal U_\delta\subset\mathcal U$.
Let $\{\varphi_q\}_{q=1}^{M_\delta}$ be the nodal basis associated with
the free degrees of freedom of $V_\delta$, where $M_\delta = \dim V_\delta$ denotes the total number of free degrees of freedom. Every
$u_\delta\in\mathcal U_\delta$ can be represented as
\[
u_\delta(x)
=
\sum_{q=1}^{M_\delta}
\varphi_q(x)\mathbf v_q, \mbox{ with }
\mathbf v_q\in\mathbb R^p.
\]
We collect the nodal displacement vectors into
\(
\mathbf v
=
(\mathbf v_1^\top,\ldots,\mathbf v_{M_\delta}^\top)^\top
\in\mathbb R^{pM_\delta},
\)
and define
\(
\mathbf B_\delta(x)
=
[
\varphi_1(x)\mathbf I_p, \) \(\ldots,
\varphi_{M_\delta}(x)\mathbf I_p
]
\in\mathbb R^{p\times pM_\delta}.
\)
It follows that
\[
u_\delta(x)
=
\mathbf B_\delta(x)\mathbf v,
\]
Since the support points $\{\xi_k\}_{k=1}^{N_\mu}$ for measure discretization need not be finite-element nodes, we connect them by evaluating the finite-element deformation field at the support points of source measures:
\begin{equation}
u_\delta(\xi_k)
=
\mathbf B_\delta(\xi_k)\mathbf v,
\mbox{ for }
k=1,\ldots,N_\mu.
\label{eq:deformation_interpolation}
\end{equation}
For piecewise affine finite elements,
\eqref{eq:deformation_interpolation} is the barycentric interpolation of
the nodal displacement vectors on the element containing $\xi_k$.

The elastic energy admits the matrix representation
\begin{equation}
E_{\rm el}(u_\delta)
=
\frac{1}{2}\mathbf v^\top\mathbf K_\delta\mathbf v,
\label{eq:discrete_elastic_energy}
\end{equation}
where $\mathbf K_\delta\in\mathbb R^{pM_\delta\times pM_\delta}$ is the elastic stiffness matrix. 
Let $\{\mathbf e_r\}_{r=1}^p$ be the canonical basis of $\mathbb R^p$ and define the vector-valued finite-element basis $\{\boldsymbol\psi_j\}_{j=1}^{pM_\delta}$ of $\mathcal U_\delta$ by
\(
\boldsymbol\psi_{(q-1)p+r}(x)
=
\varphi_q(x)\mathbf e_r, \text{ for }
q=1,\ldots,M_\delta,
r=1,\ldots,p.
\)
The entries of $\mathbf K_\delta$ are 
\([\mathbf K_\delta]_{ij}
= \mathfrak a_{\rm el}
\bigl(\boldsymbol\psi_i,\boldsymbol\psi_j\bigr),
\text{ for }
i,j=1,\ldots,pM_\delta,
\label{eq:elastic_stiffness_matrix}
\)
where
\begin{equation}
\mathfrak a_{\rm el}(v,w)
=
\int
\left[
\operatorname{tr}\!\left(
\varepsilon(v)^\top\varepsilon(w)
\right)
+
(\nabla\cdot v)(\nabla\cdot w)
\right]\dif x.
\label{eq:elastic_bilinear_form}
\end{equation}
The homogeneous boundary condition on $\Gamma_D$ is incorporated by using only the basis functions associated with the free degrees of freedom.
For each landmark pair $(x_i,y_i)$, let
\( \mathbf A_{i,\delta} \in \mathbb R^{p\times pM_\delta}\) defined by
\begin{equation}
\mathbf A_{i,\delta}
=
\frac{1}{|B_r(x_i)\cap\Omega|}
\int_{B_r(x_i)\cap\Omega}
\mathbf B_\delta(z)\dif z.
\label{eq:local_average_matrix}
\end{equation}
The integral in \eqref{eq:local_average_matrix} can be computed elementwise by standard numerical quadrature.
Then, we have
\begin{equation}
A_r^i(u_\delta)
=
\mathbf A_{i,\delta}\mathbf v.
\label{eq:discrete_landmark_operator}
\end{equation}
We finally discretize the two terms involving the transport plan.
We define the classical transport-cost matrix
$
\mathbf C
=
(C_{k\ell})
\in
\mathbb R^{N_\mu\times N_\nu} \mbox{ with }
C_{k\ell}
=
c(\xi_k,\eta_\ell),
$
and for $\mathbf v\in\mathbb R^{pM_\delta}$, define the
deformation-consistency matrix
$
\mathbf H_\delta(\mathbf v)
=
\bigl(H_{k\ell}(\mathbf v)\bigr)
\in
\mathbb R^{N_\mu\times N_\nu},
$
where
$
H_{k\ell}(\mathbf v)
=
h\left(
\xi_k
+
\mathbf B_\delta(\xi_k)\mathbf v
-
\eta_\ell
\right).
$
The finite-dimensional objective is therefore given by
\begin{equation}
\begin{aligned}
\mathcal F_{\rm dis}(\mathbf P,\mathbf v)
={}&
\left\langle
\mathbf P,
(1-\alpha)\mathbf C
+
\alpha\mathbf H_\delta(\mathbf v)
\right\rangle_F
+
\frac{\beta}{2}
\mathbf v^\top\mathbf K_\delta\mathbf v
\\
&+
\sum_{i=1}^m
w_i
\rho\left(
\mathbf A_{i,\delta}\mathbf v
-
\Delta_i^{\rm lm}
\right).
\end{aligned}
\label{eq:fully_discrete_objective}
\end{equation}
The resulting finite-dimensional
coupled OT problem is
\begin{equation}
\min_{\mathbf P\in\Pi(\mathbf a,\mathbf b),\,
      \mathbf v\in\mathbb R^{pM_\delta}}
\mathcal F_{\rm dis}(\mathbf P,\mathbf v).
\label{eq:fully_discrete_problem}
\end{equation}

\subsection{Alternating iteration}\label{sec:alternating_iteration}
We solve the finite-dimensional problem~\eqref{eq:fully_discrete_problem}
by alternating between the transport plan and the deformation coefficients.
For $\mathbf P,\mathbf Q\in\mathbb R_+^{N_\mu\times N_\nu}$, we define the
generalized Kullback--Leibler divergence by
\begin{equation}
D_\Pi(\mathbf P,\mathbf Q)
=
\sum_{k=1}^{N_\mu}
\sum_{\ell=1}^{N_\nu}
\left[
P_{k\ell}\log\frac{P_{k\ell}}{Q_{k\ell}}
-
P_{k\ell}
+
Q_{k\ell}
\right].
\label{eq:discrete_kl_divergence}
\end{equation}
Given $\lambda_\pi>0$ and $\lambda_v>0$, the discrete alternating
iteration is defined by
\begin{equation}
\left\{
\begin{aligned}
\mathbf P^{(k+1)}
&=
\mathop{\arg\min}_{\mathbf P\in\Pi(\mathbf a,\mathbf b)}
\left\{
\mathcal F_{\rm dis}(\mathbf P,\mathbf v^{(k)})
+
\lambda_\pi
D_\Pi(\mathbf P,\mathbf P^{(k)})
\right\},
\\
\mathbf v^{(k+1)}
&=
\mathop{\arg\min}_{\mathbf v\in\mathbb R^{pM_\delta}}
\left\{
\mathcal F_{\rm dis}(\mathbf P^{(k+1)},\mathbf v)
+
\frac{\lambda_v}{2}
\|\mathbf v-\mathbf v^{(k)}\|_2^2
\right\}.
\end{aligned}
\right.
\label{eq:discrete_alternating_iteration}
\end{equation}
We initialize the transport plan by a strictly positive coupling, namely, $\mathbf P^{(0)}=\mathbf a\mathbf b^\top$.

\textit{Updating the transport plan.}
We define the combined cost matrix
\begin{equation}
\mathbf G(\mathbf v^{(k)})
=
(1-\alpha)\mathbf C
+
\alpha\mathbf H_\delta(\mathbf v^{(k)}).
\label{eq:transport_cost_iteration}
\end{equation}
The first subproblem in~\eqref{eq:discrete_alternating_iteration} can
then be written as
\begin{equation}
\mathbf P^{(k+1)}
=
\mathop{\arg\min}_{\mathbf P\in\Pi(\mathbf a,\mathbf b)}
\left\{\langle
\mathbf P,\mathbf G(\mathbf v^{(k)})\rangle_F
+
\lambda_\pi
D_\Pi(\mathbf P,\mathbf P^{(k)})
\right\}.
\label{eq:transport_plan_subproblem}
\end{equation}
We define the positive kernel
\begin{equation}
\mathbf K^{(k)}
=
\mathbf P^{(k)}
\odot
\exp\left(
-{\mathbf G(\mathbf v^{(k)})}/{\lambda_\pi}
\right),
\label{eq:proximal_sinkhorn_kernel}
\end{equation}
where the exponential is entrywise and $\odot$ is denotes the Hadamard product. By introducing Lagrange multipliers for the marginal
constraints, the first-order optimality conditions of
\eqref{eq:transport_plan_subproblem} imply that
\begin{equation}
\mathbf P^{(k+1)}
=
\operatorname{diag}\bigl(\mathbf s^{(k+1)}\bigr)
\mathbf K^{(k)}
\operatorname{diag}\bigl(\mathbf t^{(k+1)}\bigr),
\label{eq:sinkhorn_scaling_form}
\end{equation}
where
$\mathbf s^{(k+1)}\in\mathbb R_+^{N_\mu}$ and
$\mathbf t^{(k+1)}\in\mathbb R_+^{N_\nu}$ are positive scaling
vectors. Substituting \eqref{eq:sinkhorn_scaling_form} into the
marginal constraints
\(
\mathbf P^{(k+1)}\mathbf 1_{N_\nu}
=
\mathbf a,
(\mathbf P^{(k+1)})^\top\mathbf 1_{N_\mu}
=
\mathbf b,
\)
we obtain a coupled fixed-point system for the scaling vectors. Starting
from a positive vector $\mathbf t^{(k+1,0)}$, it is solved by the
Sinkhorn iteration
\begin{equation}
\left\{
\begin{aligned}
\mathbf s^{(k+1,j+1)}
&=
\mathbf a
\oslash
\left(
\mathbf K^{(k)}
\mathbf t^{(k+1,j)}
\right),
\\
\mathbf t^{(k+1,j+1)}
&=
\mathbf b
\oslash
\left(
(\mathbf K^{(k)})^\top
\mathbf s^{(k+1,j+1)}
\right),
\end{aligned}
\right.
\label{eq:proximal_sinkhorn_iteration}
\end{equation}
where $\oslash$ denotes entrywise division. The limiting scaling
vectors are substituted into \eqref{eq:sinkhorn_scaling_form} to
obtain $\mathbf P^{(k+1)}$. Since $\mathbf P^{(0)}$ is
positive, both $\mathbf P^{(k)}$ and $\mathbf K^{(k)}$ remain
positive at every outer iteration, and the Sinkhorn updates are
well defined.

\paragraph{Updating the deformation} In the numerical realization,
we take \(h(z)=\|z\|_2^2,\) and  \(\rho(z)=\|z\|_2^2\).
For notational simplicity, define
\(
\mathbf B_k=\mathbf B_\delta(\xi_k),
\mathbf A_i=\mathbf A_{i,\delta}, \mbox{ and }
\mathbf d_{k\ell}=\eta_\ell-\xi_k.
\)
The second
subproblem in~\eqref{eq:discrete_alternating_iteration} can be written as
\begin{equation}
\begin{aligned}
\mathbf v^{(k+1)}
=
\mathop{\arg\min}_{\mathbf v\in\mathbb R^{pM_\delta}}
\Bigg\{
&\alpha
\sum_{k=1}^{N_\mu}
\sum_{\ell=1}^{N_\nu}
P_{k\ell}^{(k+1)}
\|\mathbf B_k\mathbf v-\mathbf d_{k\ell}\|_2^2
+
\frac{\beta}{2}
\mathbf v^\top\mathbf K_\delta\mathbf v
\\
&+
\sum_{i=1}^m
w_i
\|\mathbf A_i\mathbf v-\Delta_i^{\rm lm}\|_2^2
+
\frac{\lambda_v}{2}
\|\mathbf v-\mathbf v^{(k)}\|_2^2
\Bigg\}.
\end{aligned}
\label{eq:deformation_coefficient_subproblem}
\end{equation}
We denote 
\begin{equation}
\begin{split}
\mathbf L_\delta
= &
2\alpha
\sum_{k=1}^{N_\mu}
a_k\mathbf B_k^\top\mathbf B_k
+
\beta\mathbf K_\delta
+
2\sum_{i=1}^m
w_i\mathbf A_i^\top\mathbf A_i
+
\lambda_v\mathbf I_{pM_\delta},\\
\mathbf g^{(k+1)}
= &
2\alpha
\sum_{k=1}^{N_\mu}
\sum_{\ell=1}^{N_\nu}
P_{k\ell}^{(k+1)}
\mathbf B_k^\top\mathbf d_{k\ell}
+
2\sum_{i=1}^m
w_i\mathbf A_i^\top\Delta_i^{\rm lm}
+
\lambda_v\mathbf v^{(k)},
\end{split}
\end{equation}
Then, the first-order optimality condition gives the linear system
\begin{equation}
\mathbf L_\delta\mathbf v^{(k+1)}
= 
\mathbf g^{(k+1)}.
\label{eq:deformation_linear_system}
\end{equation}
Here we have used the marginal constraint
$\sum_{\ell=1}^{N_\nu}P_{k\ell}^{(k+1)}=a_k$. 
Since
$\lambda_v>0$, the matrix $\mathbf L_\delta$ is symmetric positive
definite, and~\eqref{eq:deformation_linear_system} admits a unique
solution.

\subsection{Convergence} We now study the convergence of the above iteration algorithm. We first provide the following theorem.
\begin{theorem}[Descent and asymptotic behavior]
\label{thm:descent-asymptotic-behavior}
Assume that \(a_i>0\) for any $i$, \(b_j>0\) for any $j$, \(\beta>0\), and \(\mathbf K_\delta\succ 0\). Suppose that all entries of the discrete cost matrices are finite, the two subproblems are solved exactly, and the transport plan is initialized by \(\mathbf P^{(0)}=\mathbf a\mathbf b^\top\). Then the sequence \(\{(\mathbf P^{(k)},\mathbf v^{(k)})\}_{k\geqslant 0}\) generated by the discrete alternating iteration satisfies the following properties:
\begin{enumerate}
\item The objective sequence
$\left\{\mathcal F_{\rm dis}(\mathbf P^{(k)},\mathbf v^{(k)})\right\}_{k\geq 0}
$
is non-increasing and converges to a finite limit \(\mathcal F_\infty\).
\item The accumulated stabilization terms are finite:
\[
\sum_{k=0}^{\infty}D_\Pi(\mathbf P^{(k+1)},\mathbf P^{(k)})<\infty
\mbox{ and }
\sum_{k=0}^{\infty}\|\mathbf v^{(k+1)}-\mathbf v^{(k)}\|_2^2<\infty.
\]
\item The differences between successive iterates vanish:
\[
\|\mathbf P^{(k+1)}-\mathbf P^{(k)}\|_F\rightarrow 0,
\mbox{ and }
\|\mathbf v^{(k+1)}-\mathbf v^{(k)}\|_2\rightarrow 0.
\]

\item The sequence \(\{(\mathbf P^{(k)},\mathbf v^{(k)})\}_{k\geqslant 0}\) is bounded. Its cluster set
is nonempty, compact, and connected.

\item Every cluster point \((\overline{\mathbf P},\overline{\mathbf v})\) satisfies
\[
\nabla_{\mathbf v}\mathcal F_{\rm dis}(\overline{\mathbf P},\overline{\mathbf v})=\mathbf 0.
\]
\end{enumerate}
\end{theorem}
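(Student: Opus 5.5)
The plan is the standard proximal alternating-minimization argument, applied to the discrete iteration \eqref{eq:discrete_alternating_iteration}.

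\textbf{Step 1: one-step descent.} Since $\mathbf P^{(k)}\in\Pi(\mathbf a,\mathbf b)$ is feasible for the first subproblem and $D_\Pi(\mathbf P^{(k)},\mathbf P^{(k)})=0$, exact minimality gives
\[
\mathcal F_{\rm dis}(\mathbf P^{(k+1)},\mathbf v^{(k)})+\lambda_\pi D_\Pi(\mathbf P^{(k+1)},\mathbf P^{(k)})\leqslant \mathcal F_{\rm dis}(\mathbf P^{(k)},\mathbf v^{(k)}).
\]
Testing the second subproblem with $\mathbf v=\mathbf v^{(k)}$ likewise gives
\[
\mathcal F_{\rm dis}(\mathbf P^{(k+1)},\mathbf v^{(k+1)})+\tfrac{\lambda_v}{2}\|\mathbf v^{(k+1)}-\mathbf v^{(k)}\|_2^2\leqslant \mathcal F_{\rm dis}(\mathbf P^{(k+1)},\mathbf v^{(k)}).
\]
Adding the two inequalities yields
\[
\mathcal F_{\rm dis}(\mathbf P^{(k+1)},\mathbf v^{(k+1)})+\lambda_\pi D_\Pi(\mathbf P^{(k+1)},\mathbf P^{(k)})+\tfrac{\lambda_v}{2}\|\mathbf v^{(k+1)}-\mathbf v^{(k)}\|_2^2\leqslant\mathcal F_{\rm dis}(\mathbf P^{(k)},\mathbf v^{(k)}).
\]
The entries of $\mathbf C$ are finite and $h,\rho\geqslant 0$, so every term of $\mathcal F_{\rm dis}$ is nonnegative (using $\mathbf K_\delta\succ0$ for the quadratic term). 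Hence the values are nonincreasing and bounded below by $0$, which proves item 1. The initial value is finite because $\mathbf P^{(0)}=\mathbf a\mathbf b^\top$ and all costs are finite. Summing the inequality telescopically bounds both accumulated stabilization sums by $\mathcal F_{\rm dis}(\mathbf P^{(0)},\mathbf v^{(0)})-\mathcal F_\infty$, which gives item 2.

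\textbf{Step 2: vanishing differences and boundedness.} For the $\mathbf v$-part, item 3 is immediate from item 2. For the $\mathbf P$-part, all iterates lie in $\Pi(\mathbf a,\mathbf b)$ and share total mass $1$, so the generalized KL divergence reduces to ordinary KL. Pinsker's inequality then gives $\|\mathbf P^{(k+1)}-\mathbf P^{(k)}\|_F\leqslant\|\cdot\|_1\leqslant\sqrt{2D_\Pi(\mathbf P^{(k+1)},\mathbf P^{(k)})}\to0$.

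For boundedness, $\Pi(\mathbf a,\mathbf b)$ is compact. The values stay below $\mathcal F_{\rm dis}(\mathbf P^{(0)},\mathbf v^{(0)})$, so $\frac{\beta}{2}\lambda_{\min}(\mathbf K_\delta)\|\mathbf v^{(k)}\|_2^2$ is bounded for $k\geqslant1$, and $\mathbf v^{(0)}$ is a single fixed vector. The cluster set of a bounded sequence is nonempty and compact. Connectedness follows from the classical fact that a bounded sequence with $\|z^{(k+1)}-z^{(k)}\|\to0$ has a connected set of limit points (Ostrowski's theorem). This settles item 4.

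\textbf{Step 3: stationarity in $\mathbf v$ (the delicate step).} With $h=\rho=\|\cdot\|_2^2$, the map $\mathbf v\mapsto\mathcal F_{\rm dis}(\mathbf P,\mathbf v)$ is a smooth quadratic whose gradient is jointly continuous in $(\mathbf P,\mathbf v)$. Differentiating \eqref{eq:deformation_coefficient_subproblem}, the optimality condition of the second subproblem reads
\[
\nabla_{\mathbf v}\mathcal F_{\rm dis}(\mathbf P^{(k+1)},\mathbf v^{(k+1)})+\lambda_v(\mathbf v^{(k+1)}-\mathbf v^{(k)})=\mathbf 0,
\]
which is the linear system \eqref{eq:deformation_linear_system}. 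Take a subsequence $(\mathbf P^{(k_j)},\mathbf v^{(k_j)})\to(\overline{\mathbf P},\overline{\mathbf v})$. By item 3, the shifted subsequence $(\mathbf P^{(k_j+1)},\mathbf v^{(k_j+1)})$ has the same limit, and $\lambda_v(\mathbf v^{(k_j+1)}-\mathbf v^{(k_j)})\to0$. Passing to the limit using continuity of the gradient gives $\nabla_{\mathbf v}\mathcal F_{\rm dis}(\overline{\mathbf P},\overline{\mathbf v})=\mathbf 0$.

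The main obstacle is bookkeeping rather than depth. One must index the optimality condition so that it is evaluated at the shifted iterate $(\mathbf P^{(k+1)},\mathbf v^{(k+1)})$, and then invoke item 3 for both blocks to move it onto the cluster point. One must also check that the KL divergence is well defined, which holds because every iterate is strictly positive by the Sinkhorn scaling form \eqref{eq:sinkhorn_scaling_form}. No stationarity in $\mathbf P$ is claimed, so we avoid the difficulty that the KL gradient can blow up at the boundary of the polytope.
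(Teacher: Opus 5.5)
Your proposal is correct and follows essentially the same route as the paper's proof: the sufficient-descent inequality from testing each exact proximal subproblem with the previous iterate, telescoping to obtain items 1--2, Pinsker to get $\|\mathbf P^{(k+1)}-\mathbf P^{(k)}\|_F\to0$, boundedness of $\mathbf v^{(k)}$ from $\mathbf K_\delta\succ 0$, and passing to the limit in the shifted first-order condition of the $\mathbf v$-update. The only difference is cosmetic: you cite Ostrowski's theorem for connectedness of the cluster set, whereas the paper proves it directly with a separation argument.
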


\begin{proof}
Since \(\mathbf P^{(k)}\in\Pi(\mathbf a,\mathbf b)\) is an admissible competitor for the transport-plan subproblem, the optimality of \(\mathbf P^{(k+1)}\) gives
\[
\mathcal F_{\rm dis}(\mathbf P^{(k+1)},\mathbf v^{(k)})+\lambda_\pi D_\Pi(\mathbf P^{(k+1)},\mathbf P^{(k)})\leqslant\mathcal F_{\rm dis}(\mathbf P^{(k)},\mathbf v^{(k)}).
\]
Similarly, we have 
\[
\mathcal F_{\rm dis}(\mathbf P^{(k+1)},\mathbf v^{(k+1)})+\frac{\lambda_v}{2}\|\mathbf v^{(k+1)}-\mathbf v^{(k)}\|_2^2\leqslant\mathcal F_{\rm dis}(\mathbf P^{(k+1)},\mathbf v^{(k)}).
\]
Combining these two inequalities yields
\begin{equation}
\begin{aligned}
&\mathcal F_{\rm dis}(\mathbf P^{(k)},\mathbf v^{(k)})-\mathcal F_{\rm dis}(\mathbf P^{(k+1)},\mathbf v^{(k+1)}) \\
&\qquad\geqslant\lambda_\pi D_\Pi(\mathbf P^{(k+1)},\mathbf P^{(k)})+\frac{\lambda_v}{2}\|\mathbf v^{(k+1)}-\mathbf v^{(k)}\|_2^2\geqslant 0.
\end{aligned}
\label{eq:discrete-descent}
\end{equation}
Therefore, the objective sequence is nonincreasing. Since \(\mathcal F_{\rm dis}\) is bounded from below, there exists \(\mathcal F_\infty\in\mathbb R\) such that \(\mathcal F_{\rm dis}(\mathbf P^{(k)},\mathbf v^{(k)})\longrightarrow\mathcal F_\infty.\)

Summing \eqref{eq:discrete-descent} from \(k=0\) to \(K\), we obtain
\begin{equation}
\begin{aligned}
&\lambda_\pi\sum_{k=0}^{K}D_\Pi(\mathbf P^{(k+1)},\mathbf P^{(k)})+\frac{\lambda_v}{2}\sum_{k=0}^{K}\|\mathbf v^{(k+1)}-\mathbf v^{(k)}\|_2^2 \\
&\qquad\leqslant\mathcal F_{\rm dis}(\mathbf P^{(0)},\mathbf v^{(0)})-\mathcal F_{\rm dis}(\mathbf P^{(K+1)},\mathbf v^{(K+1)}).
\end{aligned}
\label{eq:discrete-summability-bound}
\end{equation}
Let \(K\to\infty\), we obtain
\[
\sum_{k=0}^{\infty}D_\Pi(\mathbf P^{(k+1)},\mathbf P^{(k)})<\infty
\mbox{ and }
\sum_{k=0}^{\infty}\|\mathbf v^{(k+1)}-\mathbf v^{(k)}\|_2^2<\infty.
\]
Consequently, we have 
\[
D_\Pi(\mathbf P^{(k+1)},\mathbf P^{(k)})\rightarrow 0
\mbox{ and }
\|\mathbf v^{(k+1)}-\mathbf v^{(k)}\|_2\rightarrow 0.
\]

Since \(\mathbf P^{(k+1)}\) and \(\mathbf P^{(k)}\) both have total mass one, Pinsker's inequality gives
\[
D_\Pi(\mathbf P^{(k+1)},\mathbf P^{(k)})\geqslant\frac{1}{2}\|\mathbf P^{(k+1)}-\mathbf P^{(k)}\|_1^2.
\]
Thus,
\[
\|\mathbf P^{(k+1)}-\mathbf P^{(k)}\|_F\leqslant\|\mathbf P^{(k+1)}-\mathbf P^{(k)}\|_1\rightarrow 0.
\]

We next prove the boundedness of the iteration sequence. Let $\mathbf z^{(k)}=(\mathbf P^{(k)},\mathbf v^{(k)})$, then we have 
\begin{equation}
\|\mathbf z^{(k+1)}-\mathbf z^{(k)}\|\rightarrow 0,
\label{eq:asymptotic-regularity}
\end{equation}
where
\[
\|\mathbf z^{(k+1)}-\mathbf z^{(k)}\|^2
:=
\|\mathbf P^{(k+1)}-\mathbf P^{(k)}\|_F^2
+
\|\mathbf v^{(k+1)}-\mathbf v^{(k)}\|_2^2.
\]
The transport polytope \(\Pi(\mathbf a,\mathbf b)\) is compact, so \(\{\mathbf P^{(k)}\}_{k\geqslant 0}\) is bounded. Additionally, 
\[
\mathcal F_{\rm dis}(\mathbf P,\mathbf v)\geqslant\frac{\beta}{2}\mathbf v^\top\mathbf K_\delta\mathbf v.
\]
Since \(\mathbf K_\delta\succ 0\), there exists \(c_\delta>0\) such that
\[
\mathbf v^\top\mathbf K_\delta\mathbf v\geqslant c_\delta\|\mathbf v\|_2^2
\]
for every \(\mathbf v\in\mathbb R^{pM_\delta}\). Therefore,
\[
\frac{\beta c_\delta}{2}\|\mathbf v^{(k)}\|_2^2\leqslant\mathcal F_{\rm dis}(\mathbf P^{(k)},\mathbf v^{(k)})\leqslant\mathcal F_{\rm dis}(\mathbf P^{(0)},\mathbf v^{(0)}).
\]
Thus, \(\{\mathbf v^{(k)}\}_{k\geqslant 0}\) is bounded, and hence \(\{\mathbf z^{(k)}\}_{k\geqslant 0}\) is bounded. If we denote the cluster set of $\{\mathbf z^{(k)}\}_{k\geqslant 0}$ as $$\omega(\mathbf z^{(0)}):=\{\overline{\mathbf z}:\mathbf z^{(k_j)}\to\overline{\mathbf z}\text{ for some }k_j\to\infty\},$$ then $\omega(\mathbf z^{(0)})$ is nonempty and compact. Let \(\mathbf z^{(k_j)}\to\overline{\mathbf z}\) be any convergent subsequence. By the continuity of \(\mathcal F_{\rm dis}\), we have
\[
\mathcal F_{\rm dis}(\overline{\mathbf z})=\lim_{j\to\infty}\mathcal F_{\rm dis}(\mathbf z^{(k_j)})=\mathcal F_\infty.
\]
Thus, all cluster points have the same objective value. 

It remains to prove that the cluster set is connected. Suppose, by contradiction,
\[
\omega(\mathbf z^{(0)})=\mathcal A\cup\mathcal B,
\]
where \(\mathcal A\) and \(\mathcal B\) are nonempty, disjoint compact sets. Then
\(d:=\operatorname{dist}(\mathcal A,\mathcal B)>0.\)
Since \(\mathcal A\) and \(\mathcal B\) are nonempty subsets of the
cluster set, choose \(\mathbf a\in\mathcal A\) and
\(\mathbf b\in\mathcal B\). By the definition of a cluster point, the
sequence has subsequences converging to \(\mathbf a\) and \(\mathbf b\),
respectively. Hence, it visits the \(d/3\)-neighborhoods of both
\(\mathcal A\) and \(\mathcal B\) infinitely many times. By \eqref{eq:asymptotic-regularity}, for all sufficiently large \(k\),
\[
\|\mathbf z^{(k+1)}-\mathbf z^{(k)}\|<{d}/{3}.
\]
Therefore, whenever the sequence moves from the \(d/3\)-neighborhood of \(\mathcal A\) to the \(d/3\)-neighborhood of \(\mathcal B\), it must contain an intermediate iterate outside both neighborhoods. Since the sequence is bounded, these intermediate iterates admit a cluster point outside \(\mathcal A\cup\mathcal B\), which is a contradiction. Hence, the cluster set is connected.

Finally, the first-order optimality condition for the deformation update is
\begin{equation}
\nabla_{\mathbf v}\mathcal F_{\rm dis}(\mathbf P^{(k+1)},\mathbf v^{(k+1)})+\lambda_v(\mathbf v^{(k+1)}-\mathbf v^{(k)})=\mathbf 0.
\label{eq:deformation-optimality}
\end{equation}
Let
\(
(\mathbf P^{(k_j)},\mathbf v^{(k_j)})\rightarrow(\overline{\mathbf P},\overline{\mathbf v})
\)
be a subsequence converging to a cluster point. By \eqref{eq:asymptotic-regularity}, we have
\(
(\mathbf P^{(k_j+1)},\mathbf v^{(k_j+1)})\rightarrow(\overline{\mathbf P},\overline{\mathbf v}).
\)
Passing to the limit in \eqref{eq:deformation-optimality} and using the continuity of \(\nabla_{\mathbf v}\mathcal F_{\rm dis}\), we obtain
\[
\nabla_{\mathbf v}\mathcal F_{\rm dis}(\overline{\mathbf P},\overline{\mathbf v})=\mathbf 0.
\]
\end{proof}

In Theorem~\ref{thm:descent-asymptotic-behavior}, the assumptions are naturally satisfied by the present discretization. In particular, the coercivity of the discrete elastic energy under the imposed boundary conditions ensures that \(\mathbf K_\delta\) is symmetric positive definite.
Theorem~\ref{thm:descent-asymptotic-behavior} establishes the asymptotic properties of the discrete alternating iteration. Specifically, the objective value decreases monotonically and converges to a finite limit, while the accumulated stabilization terms remain finite. Consequently, the differences between successive iterates vanish, ruling out persistent oscillations with nonvanishing step sizes. Moreover, the generated sequence is bounded, and its cluster set is nonempty, compact, and connected. Every cluster point is stationary with respect to the deformation variable. These results provide a stability guarantee for the iteration.

By Theorem~\ref{thm:descent-asymptotic-behavior}, if the cluster set is finite, its connectedness implies that it consists of a single point. Consequently, the entire iteration sequence converges to this point. However, such a condition is hard to verify. Next, we incorporate an entropy regularization term for the transport plan into the objective function, which allows us to establish convergence of the entire iteration sequence.



\paragraph{Entropy-regularized iteration}
We define the entropy
\[
\operatorname{Ent}(\mathbf P)
:=
-\sum_{i=1}^{N_\mu}
\sum_{j=1}^{N_\nu}
P_{ij}(\log P_{ij}-1),
\]
and the objective becomes $\mathcal{F}_{\rm dis}-\lambda_{\rm ent}\operatorname{Ent}(\mathbf P)$ for some \(\lambda_{\rm ent}>0\). The corresponding alternating iteration is
\begin{equation}
\left\{
\begin{aligned}
\mathbf P^{(k+1)}
&=
\mathop{\arg\min}_{\mathbf P\in\Pi(\mathbf a,\mathbf b)}
\left\{
\mathcal F_{\rm dis}(\mathbf P,\mathbf v^{(k)})
-
\lambda_{\rm ent}\operatorname{Ent}(\mathbf P)
\right\},
\\
\mathbf v^{(k+1)}
&=
\mathop{\arg\min}_{\mathbf v\in\mathbb R^{pM_\delta}}
\left\{
\mathcal F_{\rm dis}(\mathbf P^{(k+1)},\mathbf v)
+
\frac{\lambda_v}{2}
\|\mathbf v-\mathbf v^{(k)}\|_2^2
\right\}.
\end{aligned}
\right.
\label{eq:entropy_alternating}
\end{equation}
where we omitted the stabilization $D_{\Pi}$ for updating $\mathbf{P}$.  
Consequently, we only need to change the definition in \eqref{eq:proximal_sinkhorn_kernel} to 
\[
\mathbf K^{(k)}
=
\exp\left(
-{\mathbf G(\mathbf v^{(k)})}/{\lambda_{\rm ent}}
\right),
\]
while all subsequent steps remain unchanged.

\begin{theorem}\label{cor:entropy_convergence}
Assume that \(a_i>0\) for any $i$, \(b_j>0\) for any $j$, \(\beta>0\), and \(\mathbf K_\delta\succ 0\), and that all discrete costs are finite. Suppose that the subproblems in \eqref{eq:entropy_alternating} are solved exactly. Then
\[
(\mathbf P^{(k)},\mathbf v^{(k)})
\rightarrow
(\mathbf P^\star,\mathbf v^\star),
\]
where \((\mathbf P^\star,\mathbf v^\star)\) is a constrained critical
point of
\(
\mathcal F_{\rm dis}
-\lambda_{\rm ent}\operatorname{Ent}(\mathbf{P})
\)
on
\(\Pi(\mathbf a,\mathbf b)\times\mathbb R^{pM_\delta}\).
\end{theorem}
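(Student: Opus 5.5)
The plan is to obtain full-sequence convergence from the Kurdyka--Łojasiewicz (KL) framework of Attouch--Bolte--Svaiter, applied to the entropy-regularized objective
\[
\Psi(\mathbf P,\mathbf v):=\mathcal F_{\rm dis}(\mathbf P,\mathbf v)-\lambda_{\rm ent}\operatorname{Ent}(\mathbf P)+\iota_{\Pi(\mathbf a,\mathbf b)}(\mathbf P).
\]
In the numerical realization $h$ and $\rho$ are quadratic, so $\mathcal F_{\rm dis}$ is a polynomial in $(\mathbf P,\mathbf v)$. The function $P\log P$ is globally subanalytic on bounded sets, hence definable in the o-minimal structure $\mathbb R_{\rm an,exp}$. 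The polytope is semialgebraic. Therefore $\Psi$ is definable and satisfies the KL property at every point. I will verify the three standard ingredients: sufficient decrease, relative error, and continuity along convergent subsequences.

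\emph{Sufficient decrease.} The $\mathbf P$-subproblem is strongly convex in $\mathbf P$ on the polytope with respect to $\|\cdot\|_1$, since $-\operatorname{Ent}$ is $1$-strongly convex on the simplex (Pinsker). Because $\mathbf P^{(k+1)}$ is its exact minimizer, for any admissible competitor we get
\[
\Psi(\mathbf P^{(k)},\mathbf v^{(k)})-\Psi(\mathbf P^{(k+1)},\mathbf v^{(k)})\geqslant\lambda_{\rm ent}D_\Pi(\mathbf P^{(k)},\mathbf P^{(k+1)})\geqslant\tfrac{\lambda_{\rm ent}}{2}\|\mathbf P^{(k+1)}-\mathbf P^{(k)}\|_F^2.
\]
This is the three-point identity for Bregman minimizers, using linearity of $\mathcal F_{\rm dis}$ in $\mathbf P$. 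The $\mathbf v$-step contributes $\frac{\lambda_v}{2}\|\mathbf v^{(k+1)}-\mathbf v^{(k)}\|^2$, as in Theorem~\ref{thm:descent-asymptotic-behavior}. Boundedness of $\mathbf v^{(k)}$ follows verbatim from that theorem, using $\mathbf K_\delta\succ0$ and the fact that $-\operatorname{Ent}$ is bounded below on the polytope.

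\emph{Relative error.} This is the main obstacle, because the entropy gradient $\log\mathbf P$ blows up at the boundary of the polytope. I will first show a uniform positive lower bound on the entries of the iterates. The Sinkhorn form gives $\mathbf P^{(k+1)}=\operatorname{diag}(\mathbf s)\exp(-\mathbf G(\mathbf v^{(k)})/\lambda_{\rm ent})\operatorname{diag}(\mathbf t)$. Since $\mathbf v^{(k)}$ is bounded and the costs are finite, $\mathbf G(\mathbf v^{(k)})$ is bounded, and its entries lie in a fixed interval $[g_-,g_+]$. The marginal equations then force
\[
P^{(k+1)}_{ij}\geqslant \epsilon:=\min_i a_i\,\min_j b_j\, e^{-(g_+-g_-)/\lambda_{\rm ent}}>0,
\]
which follows from a standard ratio bound on Sinkhorn scalings. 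Consequently all iterates and all cluster points lie in the compact set $\{\mathbf P\in\Pi(\mathbf a,\mathbf b):\mathbf P\geqslant\epsilon\}$. On this set $\Psi-\iota_\Pi$ is $C^\infty$ with Lipschitz gradient.

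With the lower bound in hand, the optimality condition of the $\mathbf P$-step reads $\mathbf 0\in\nabla_{\mathbf P}\Psi_0(\mathbf P^{(k+1)},\mathbf v^{(k)})+N_\Pi(\mathbf P^{(k+1)})$, where $\Psi_0:=\Psi-\iota_\Pi$. Hence $\nabla_{\mathbf P}\Psi_0(\mathbf P^{(k+1)},\mathbf v^{(k+1)})-\nabla_{\mathbf P}\Psi_0(\mathbf P^{(k+1)},\mathbf v^{(k)})$ lies in $\partial_{\mathbf P}\Psi$. Its norm is at most $L\|\mathbf v^{(k+1)}-\mathbf v^{(k)}\|$, since $\mathbf H_\delta$ is smooth in $\mathbf v$. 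The $\mathbf v$-step gives the element $\lambda_v(\mathbf v^{(k)}-\mathbf v^{(k+1)})\in\partial_{\mathbf v}\Psi$. Together these yield
\[
\operatorname{dist}(0,\partial\Psi(\mathbf z^{(k+1)}))\leqslant C\|\mathbf z^{(k+1)}-\mathbf z^{(k)}\|.
\]

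\emph{Conclusion.} Continuity of $\Psi$ on the region $\mathbf P\geqslant\epsilon$ gives the continuity condition. The Attouch--Bolte--Svaiter theorem then yields finite length, $\sum_k\|\mathbf z^{(k+1)}-\mathbf z^{(k)}\|<\infty$, so the whole sequence converges to some $(\mathbf P^\star,\mathbf v^\star)$. Passing to the limit in the two optimality conditions, using closedness of the normal cone and $\mathbf P^\star\geqslant\epsilon$, shows $0\in\nabla\Psi_0(\mathbf P^\star,\mathbf v^\star)+N_{\Pi}(\mathbf P^\star)\times\{0\}$. This is exactly a constrained critical point.
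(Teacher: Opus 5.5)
Your proposal is correct and follows essentially the same route as the paper: Bregman three-point sufficient decrease combined with Pinsker, boundedness of $\mathbf v^{(k)}$ from $\mathbf K_\delta\succ0$, a uniform positive lower bound on $\mathbf P^{(k)}$ from the Sinkhorn scaling form, the relative-error estimate through $\mathbf G(\mathbf v^{(k+1)})-\mathbf G(\mathbf v^{(k)})$, and the Kurdyka--\L{}ojasiewicz convergence theorem; the only difference is that you get the KL property from definability in $\mathbb R_{\rm an,exp}$, whereas the paper uses real analyticity of the restriction to the affine subspace near cluster points, which the positivity bound makes available. There are two small slips that do not affect the argument: $P\log P$ is definable in $\mathbb R_{\rm an,exp}$ but is not globally subanalytic (otherwise $\log$ would be definable in the polynomially bounded structure $\mathbb R_{\rm an}$), and the correct Sinkhorn bound is the paper's $P^{(k+1)}_{ij}\geqslant a_ib_j\,e^{-2(g_+-g_-)/\lambda_{\rm ent}}$, because your single-exponent constant fails, e.g.\ for $\mathbf a=\mathbf b=(0.1,0.9)^\top$ and a kernel with diagonal entries $\kappa\ll1$ and off-diagonal entries $1$, where $P_{11}\approx\kappa^2/80<\kappa/100$.
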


\begin{proof}
We denote the extended objective by
\[
\Phi(\mathbf P,\mathbf v)
:=
\mathcal F_{\rm dis}(\mathbf P,\mathbf v)
-
\lambda_{\rm ent}\operatorname{Ent}(\mathbf P)
+
\iota_{\Pi(\mathbf a,\mathbf b)}(\mathbf P).
\]
The optimality condition of the transport-plan update, together with
the Bregman identity for the entropy, gives
\[
\Phi(\mathbf P^{(k)},\mathbf v^{(k)})
-
\Phi(\mathbf P^{(k+1)},\mathbf v^{(k)})
\geqslant
\lambda_{\rm ent}
D_\Pi(\mathbf P^{(k)},\mathbf P^{(k+1)}).
\]
The optimality of the deformation update gives
\[
\begin{aligned}
\Phi(\mathbf P^{(k+1)},\mathbf v^{(k)})
-
\Phi(\mathbf P^{(k+1)},\mathbf v^{(k+1)})
\geqslant
\frac{\lambda_v}{2}
\|\mathbf v^{(k+1)}-\mathbf v^{(k)}\|_2^2.
\end{aligned}
\]
Combining these inequalities with Pinsker's inequality yields, for some
\(c_1>0\),
\[
\Phi(\mathbf z^{(k)})
-
\Phi(\mathbf z^{(k+1)})
\geqslant
c_1
\|\mathbf z^{(k+1)}-\mathbf z^{(k)}\|^2.
\]
Since \(\Pi(\mathbf a,\mathbf b)\) is compact,
\(-\operatorname{Ent}\) is bounded from below on
\(\Pi(\mathbf a,\mathbf b)\). The decrease of
\(\Phi(\mathbf z^{(k)})\), together with
\(\beta>0\) and \(\mathbf K_\delta\succ0\), implies that
\(\{\mathbf v^{(k)}\}_{k\geqslant0}\) is bounded. By the continuity of
\(\mathbf G\), there exist constants
$
0<\kappa^-
\leqslant
\kappa^+
<\infty
$
such that
\[
\kappa^-
\leqslant
[\mathbf K^{(k)}]_{ij}
\leqslant
\kappa^+
\text{ for all }i,j,k.
\]
Let
\(
S^{(k+1)}
:=
\sum_i s_i^{(k+1)},
T^{(k+1)}
:=
\sum_j t_j^{(k+1)}.
\)
The marginal constraints imply
\[
s_i^{(k+1)}
\geqslant
\frac{a_i}{\kappa^+T^{(k+1)}},
\;
t_j^{(k+1)}
\geqslant
\frac{b_j}{\kappa^+S^{(k+1)}}.
\]
Moreover, since \(\mathbf P^{(k+1)}\) has unit total mass,
\[
1
=
\sum_{i,j}
s_i^{(k+1)}
[\mathbf K^{(k)}]_{ij}
t_j^{(k+1)}
\geqslant
\kappa^-
S^{(k+1)}T^{(k+1)}.
\]
It follows that
\[
P_{ij}^{(k+1)}
\geqslant
a_i b_j
\left(
{\kappa^-}/
{\kappa^+}
\right)^2
\text{ for all }i,j,k.
\]
Hence the transport iterates are uniformly bounded away from zero.
The optimality condition of the transport-plan update gives
\[
\mathbf 0
\in
\mathbf G(\mathbf v^{(k)})
+
\lambda_{\rm ent}\log\mathbf P^{(k+1)}
+
N_{\Pi(\mathbf a,\mathbf b)}
(\mathbf P^{(k+1)}).
\]
Since
\[
\partial_{\mathbf P}
\Phi(\mathbf z^{(k+1)})
=
\mathbf G(\mathbf v^{(k+1)})
+
\lambda_{\rm ent}\log\mathbf P^{(k+1)}
+
N_{\Pi(\mathbf a,\mathbf b)}
(\mathbf P^{(k+1)}),
\]
we obtain
\[
\operatorname{dist}
\left(
\mathbf 0,
\partial_{\mathbf P}
\Phi(\mathbf z^{(k+1)})
\right)
\leqslant
\|\mathbf G(\mathbf v^{(k+1)})
-
\mathbf G(\mathbf v^{(k)})\|_F.
\]
Since \(\{\mathbf v^{(k)}\}_{k\geqslant0}\) is bounded and
\(\mathbf G\) is continuously differentiable, there exists \(L_G>0\)
such that
\[
\operatorname{dist}
\left(
\mathbf 0,
\partial_{\mathbf P}
\Phi(\mathbf z^{(k+1)})
\right)
\leqslant
L_G
\|\mathbf v^{(k+1)}-\mathbf v^{(k)}\|_2.
\]
For the deformation component, the optimality condition gives
\[
\nabla_{\mathbf v}
\Phi(\mathbf z^{(k+1)})
=
-\lambda_v
(\mathbf v^{(k+1)}-\mathbf v^{(k)}).
\]
Consequently, there exists \(c_2>0\) such that
\[
\operatorname{dist}
\left(
\mathbf 0,
\partial\Phi(\mathbf z^{(k+1)})
\right)
\leqslant
c_2
\|\mathbf z^{(k+1)}-\mathbf z^{(k)}\|_2.
\]
The uniform positivity of \(\{\mathbf P^{(k)}\}_{k\geqslant 1}\) implies that there exists \(\tau>0\) such that
\(
P_{ij}^{(k)}\geqslant\tau
\)
for all \(i,j\) and \(k\geqslant 1\). Consequently, every cluster point \(\bar{\mathbf P}\) of the transport iterates satisfies \(\bar P_{ij}\geqslant\tau\). Hence, in a sufficiently small neighborhood of each cluster point, all entries of \(\mathbf P\) remain strictly positive, and the nonnegativity constraints in \(\Pi(\mathbf a,\mathbf b)\) are inactive. Locally, the feasible set therefore coincides with the affine subspace determined by
\[
\mathbf P\mathbf 1_{N_{\nu}}=\mathbf a,\;
\mathbf P^\top\mathbf 1_{N_{\mu}}=\mathbf b.
\]
On this neighborhood, the entropy term $-{\rm Ent}(\mathbf P)$is real analytic. Moreover, under the quadratic choices of \(h\) and \(\rho\), the function \(\mathcal F_{\rm dis}\) is polynomial in \((\mathbf P,\mathbf v)\). Therefore, the restriction of the smooth part of \(\Phi\) to the above affine subspace is real analytic and satisfies the \L{}ojasiewicz gradient inequality. Equivalently, \(\Phi\) satisfies the Kurdyka--\L{}ojasiewicz property in a neighborhood of every cluster point of the sequence.
The preceding
sufficient-descent and subgradient estimates, together with the
boundedness of \(\{\mathbf z^{(k)}\}_{k\geqslant0}\), therefore imply
\[
\sum_{k=0}^{\infty}
\|\mathbf z^{(k+1)}-\mathbf z^{(k)}\|_2
<
\infty
\]
by the Kurdyka--\L{}ojasiewicz convergence theorem. Hence
\(\{\mathbf z^{(k)}\}_{k\geqslant0}\) is a Cauchy sequence and converges
to some \(\mathbf z^\star\). Moreover, the subgradient estimate gives
\[
\operatorname{dist}
\left(
\mathbf 0,
\partial\Phi(\mathbf z^{(k)})
\right)
\rightarrow
0.
\]
By the closedness of the limiting subdifferential,
\[
\mathbf 0
\in
\partial\Phi(\mathbf z^\star).
\]
Thus, \(\mathbf z^\star=(\mathbf P^\star,\mathbf v^\star)\) is a
constrained critical point of $\mathcal F_{\rm dis} -\lambda_{\rm ent}{\rm Ent}(\mathbf{P})$.
\end{proof}


By Theorem~\ref{cor:entropy_convergence}, the iteration sequence for the entropy-regularized discrete objective converges to a constrained critical point. 

\section{Numerical results}
We first conduct controlled experiments on synthetic fish-shaped distributions with explicitly prescribed ground-truth deformation fields. We then present a real shape-matching experiment for which the ground-truth deformation field is unavailable.
\subsection{Synthetic experiments}
\label{sec:fish}


We set the computational domain as \(\Omega=[0,1]^2\). The source distribution is constructed from a filled fish-shaped mask (see Fig.~\ref{fig:density_landmark}). The binary mask is smoothed using a Gaussian filter with standard deviation \(0.8\). The resulting density is normalized to have unit mass. The target distribution is generated by deforming the source distribution through the prescribed map $u^\star$.
To make the ground-truth deformation compatible with the homogeneous Dirichlet boundary condition, we introduce the cutoff function
\[
\chi_\tau(x)=\zeta_\tau(x_1)\zeta_\tau(x_2), \quad \tau=0.05,
\]
where
\[
\zeta_\tau(s)=
\begin{cases}
s/\tau, & 0\leqslant s<\tau,\\
1, & \tau\leqslant s\leqslant 1-\tau,\\
(1-s)/\tau, & 1-\tau<s\leqslant 1.
\end{cases}
\]
The ground-truth deformation is then defined by
\[
u^\star(x)
=
\chi_\tau(x)
\left[
(A-I_2)(x-x_{\mathrm c})+u_{\mathrm{nr}}(x)
\right],
\quad
x_{\mathrm c}=(0.5,0.5)^\top,
\]
where
\[
A
=
R(8)\operatorname{diag}(1.08,0.94),
\quad
R(\theta)
=
\begin{pmatrix}
\cos\theta & -\sin\theta\\
\sin\theta & \cos\theta
\end{pmatrix}.
\]
For \(x=(x_1,x_2)^\top\), the nonrigid component is defined as
\[
u_{\mathrm{nr}}(x)
=
\begin{pmatrix}
0.018\sin(\pi x_1)\sin(2\pi x_2)+0.012q(x)\\
0.020\sin(2\pi x_1)\sin(\pi x_2)-0.010(x_1-0.55)q(x)
\end{pmatrix},
\]
where
\[
q(x)
=
\exp\!\left(
-
\Bigl(\frac{x_1-0.66}{0.16}\Bigr)^2
-
\Bigl(\frac{x_2-0.54}{0.11}\Bigr)^2
\right).
\]
Here, \(\pi\) denotes the circle constant. This deformation combines rotation, anisotropic scaling, smooth spatial oscillation, and a localized nonlinear deformation near the front part of the fish, providing a nontrivial test case involving multiple deformation patterns; it also satisfies \(u^\star=0\) on \(\partial\Omega\). The target density is obtained by transporting the source masses through \(T^\star\) and interpolating them back onto the original Cartesian grid using barycentric coordinates.
A Gaussian filter with standard deviation \(0.6\) is subsequently applied, followed by mass normalization.
The source and target measures are discretized on a uniform \(11\times11\) Cartesian grid, as computing the transport plan on a larger grid is expensive. The deformation field is represented on a uniform \(21\times21\) Cartesian grid.  
Eight candidate landmarks are placed at representative positions of the source fish, including the nose, the upper and lower tail tips, the dorsal and ventral fin tips, the upper and lower body boundaries, and the tail base, as illustrated in Fig.~\ref{fig:density_landmark}. For a source landmark \(x_i\), its target counterpart \(y_i\) is generated using the local observation operator \(A_r^i(u^\star)\) on $x_i$.
\begin{figure}
    \centering
    \includegraphics[width=0.7\linewidth]{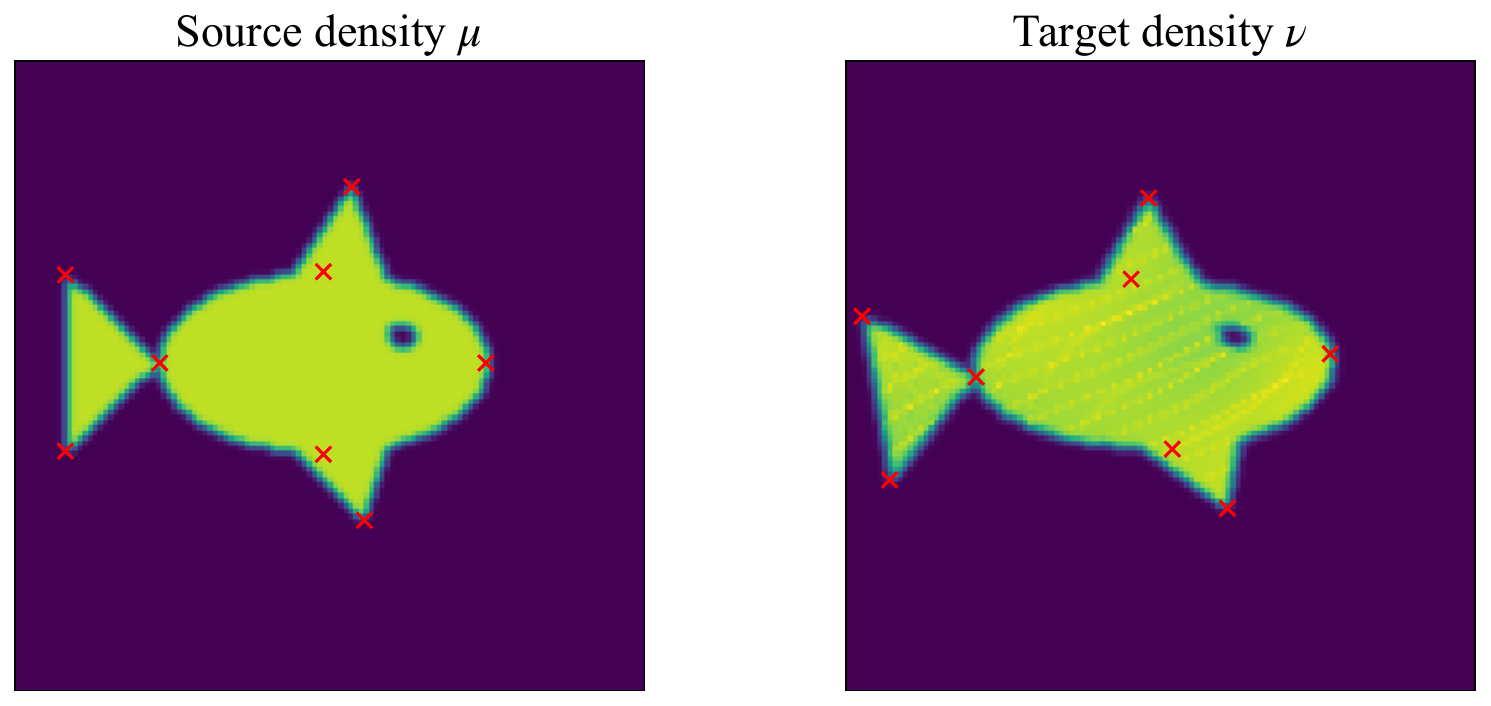}
    \caption{Fish-shaped source and target densities, together with the candidate landmarks marked by red crosses.}
    \label{fig:density_landmark}
\end{figure}

To validate the effectiveness of our coupled OT (denoted by COT), we compare it with two baselines: OT (barycentric) and Landmark-only. The OT (barycentric) method estimates the deformation via barycentric projection of an OT plan. The Landmark-only method estimates the deformation from landmark fitting and elastic regularization alone.
Our proposed COT is implemented using both the KL-based proximal and entropy-regularized algorithms; we denote them by COT (proximal KL) and COT (entropy), respectively.

All numerical experiments are conducted in a single-threaded setting using Python 3.9 on a CentOS 7 Linux server equipped with two Intel Xeon Gold 6246R processors running at \(3.40\) GHz and \(376\) GB of RAM. We use the quadratic choices
\(
c(x,y)=\|x-y\|_2^2,
h(z)=\|z\|_2^2,
\rho(z)=\|z\|_2^2,
\)
and assign the same weight \(w_i=1\) to all observed landmarks. The model parameters are set as
\(\alpha=0.5,\beta=0.05.\)
For the KL-proximal algorithm, we take
\(\lambda_\pi=10^{-3},\lambda_v=10^{-3},\)
whereas the entropy parameter in the entropy-regularized algorithm is set as
\(
\lambda_{\mathrm{ent}}=5\times10^{-3}.
\)
The transport plan and deformation coefficients are initialized by
\(
\mathbf P^{(0)}=\mathbf a\mathbf b^\top,\mathbf v^{(0)}=\mathbf 0.
\)

Since the ground-truth deformation is available, we first assess the recovered field using the \textit{Deformation error} on the source distribution
\[
e_{L^2(\mu)}
=
\big(
\sum_j
a_j
\left\|
u_\delta(\xi_j)-u^\star(\xi_j)
\right\|_2^2
\big)^{1/2}.
\]
In the experiments, we randomly hold a subset of the candidate landmarks out for computation and evaluate the method on the held-out landmarks. The \textit{Held-out landmark error} is defined by 
\[
e_{\mathrm{lm}}^{\mathrm{held}}
=
\big(
\frac{1}{|\mathcal I_{\mathrm{held}}|}
\sum_{i\in\mathcal I_{\mathrm{held}}}
\left\|
\mathbf{A}_{i,\delta}\mathbf v-A_r^i(u^\star)
\right\|_2^2
\big)^{1/2},
\]
where $\mathcal I_{\mathrm{held}}$ is set of the held-out landmarks.
To measure distributional alignment, we warp the source density using the estimated deformation and compute its discrete \(L^1\) discrepancy from the target density (\textit{Density error}). All the metrics are computed on the $21\times21$ fine grid.

\begin{figure}
    \centering
    \includegraphics[width=1\linewidth]{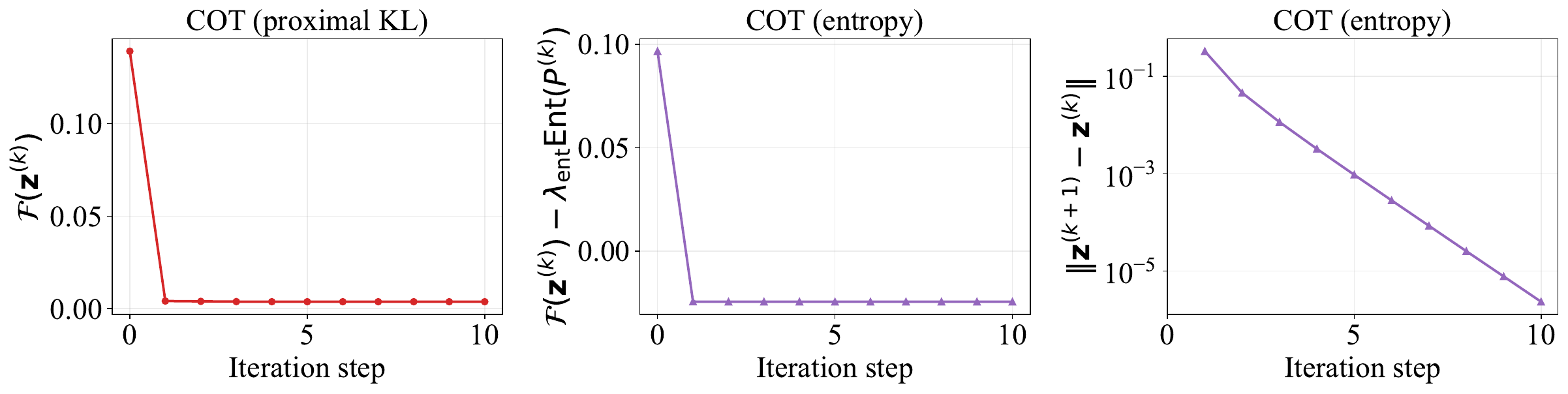}
    \caption{Left and middle: objective values $\mathcal{F}(\mathbf{z}^{(k)})$ along the iterations of the KL-proximal (COT (proximal KL)) and entropy-regularized (COT (entropy)) algorithms for coupled OT. Right: relative error $\|\mathbf{z}^{(k+1)}-\mathbf{z}^{(k)}\|$ between two successive iterates of COT (entropy).}
    \label{fig:convergence}
\end{figure}

\paragraph{Convergence} We first validate the convergence properties of the computational algorithms. For the KL-based proximal algorithm, we plot the objective function values along the iteration sequence in the left panel of Fig.~\ref{fig:convergence}. We can observe that the objective function value stably decreases and converges, which echoes Theorem~\ref{thm:descent-asymptotic-behavior}. For the entropy-regularized algorithm, we theoretically show the convergence of the iteration sequence in Theorem~\ref{cor:entropy_convergence}. In Fig.~\ref{fig:convergence}, we plot both the objective function values along the iteration sequence in the middle panel and the relative error between two successive iterates in the right panel. 
It can be seen that the objective function values decrease and the iteration sequence converges, as the relative error stably decreases; this echoes Theorem~\ref{cor:entropy_convergence}.

\begin{figure}
    \centering
    \includegraphics[width=1\linewidth]{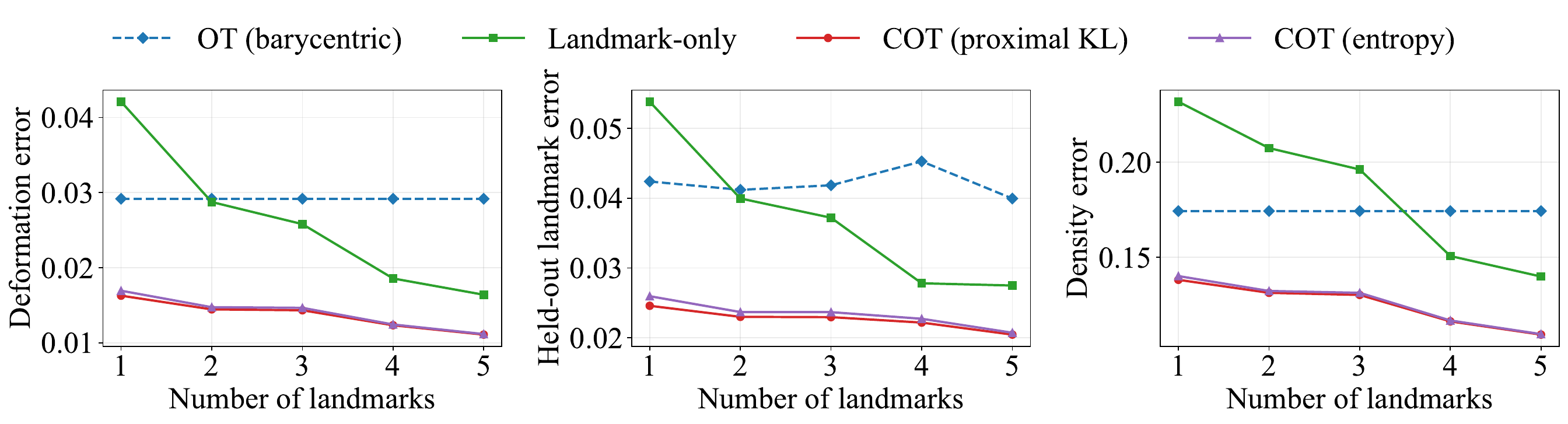}
    \caption{Deformation error (left), held-out landmark error (middle), and density error (right) of four methods under varying numbers of landmarks on the fish-shaped mask. Results are averaged over five replications.}
    \label{fig:metric}
\end{figure}
\paragraph{Performance of estimated deformation} We report the metric of the estimated deformation in Fig.~\ref{fig:metric}. We have the following observations from Fig.~\ref{fig:metric}. As the number of landmarks increases, both the Landmark-only using only landmarks and our proposed COT (proximal KL) and COT (entropy) consistently improve performance across all three metrics, confirming that the landmarks indeed help recover the deformation field under distribution transition. Although the method of OT (barycentric) directly matches the source and target distributions on the coarse computational grid, its density error remains higher than that of COT after extension to the fine grid. This may be attributed to the fact that OT (barycentric) estimates a transport plan rather than a spatially coherent deformation map. The deformation field is subsequently induced through barycentric projection, so its regularity and off-grid accuracy are not directly controlled by the OT objective. 
The benefit of OT prior is particularly evident in the sparse-landmark regime, where COT substantially outperforms Landmark-only, demonstrating that global distribution matching effectively complements the limited geometric information provided by sparse landmarks.
Meanwhile, as more landmarks become available, the gap between COT and Landmark-only  narrows, but COT remains consistently more accurate by combining global distribution matching with landmark constraints. Additionally, the proximal-KL and entropy-regularized variants perform nearly identically, indicating little sensitivity to the numerical scheme.

\begin{figure}
    \centering
    \includegraphics[width=1\linewidth]{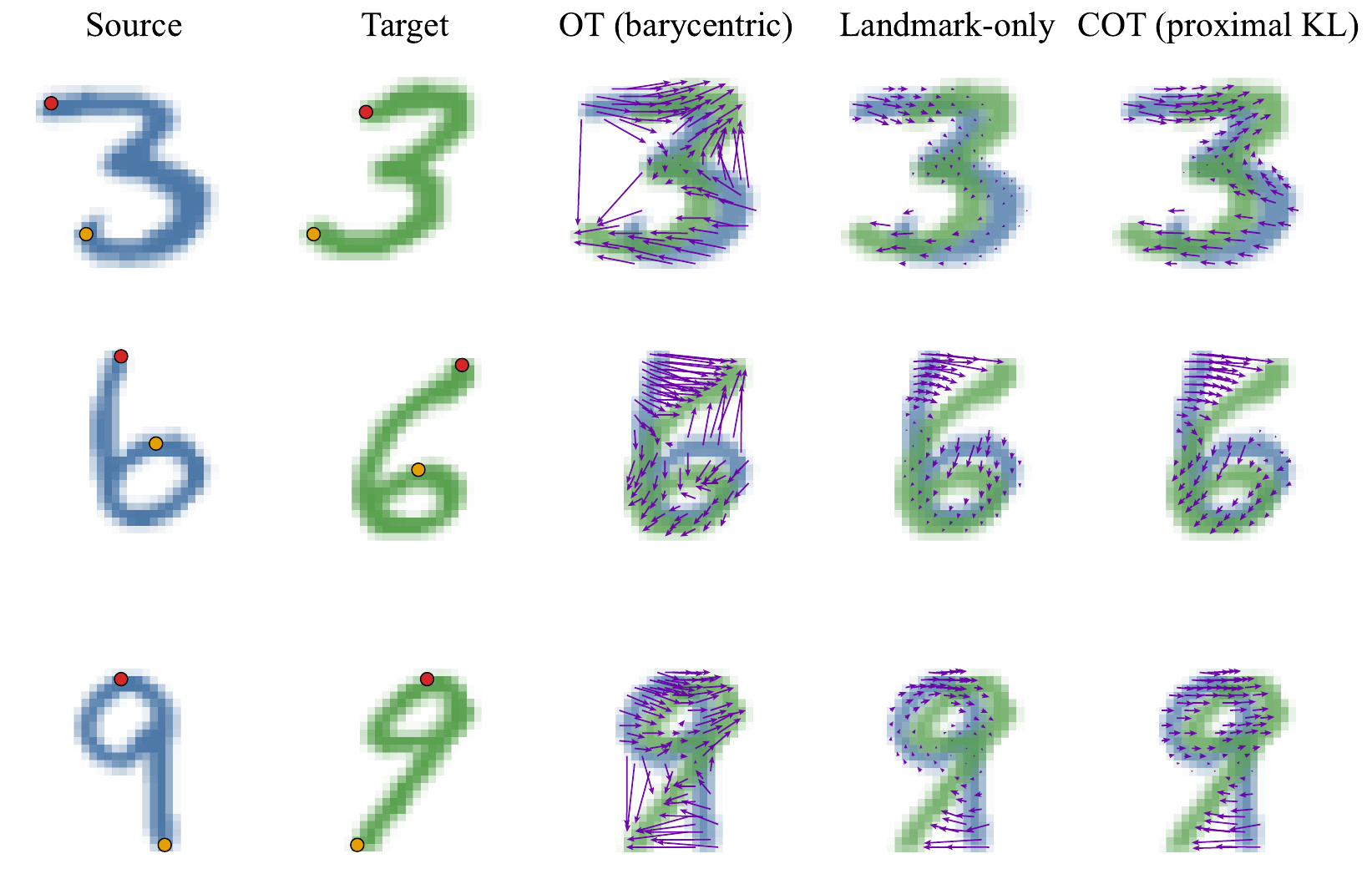}
    \caption{Comparison of the deformation fields estimated by different methods. In each row, the source and target shapes are shown in blue and green, respectively. Two landmark pairs indicated in red and orange are provided for each shape pair, and the estimated deformation fields are visualized by pink arrows. }
    \label{fig:mnist_shape_matching}
\end{figure}
\subsection{Real Shape Matching} We further evaluate the proposed method on real shape-matching problems. We use handwritten digit images from the MNIST dataset~\cite{lecun1998gradient} and select three representative source-target pairs, as shown in Fig.~\ref{fig:mnist_shape_matching}. For each pair, two landmarks are manually specified to provide sparse geometric guidance. Unlike the synthetic setting, the true deformation between two real shapes is unavailable, so the comparison is mainly based on the estimated deformation fields. Since COT (proximal KL) and COT (entropy) exhibit nearly identical performance in the synthetic experiments, we report only COT (proximal KL) here for clarity.

As shown in Fig.~\ref{fig:mnist_shape_matching}, OT (barycentric) captures the overall distributional correspondence but can produce spatially irregular displacement fields that are not fully consistent with the apparent shape structure. In contrast, Landmark-only produces smoother deformation fields around the prescribed landmarks, but the estimated deformation diminishes in regions where the sparse landmark constraints provide little guidance. Through the coupled estimation of transport and deformation, our proposed COT (proximal KL) propagates both distributional and landmark information across the source support, leading to more spatially coherent deformation fields that remain consistent with the prescribed correspondences. These results further demonstrate the benefit of coupling distributional alignment with sparse geometric constraints.

\section{Conclusions}
\label{sec:conclusions}
This paper proposed a novel coupled optimal transport (OT) framework that jointly optimizes the transport plan and deformation field, integrating distributional alignment induced by transport-cost minimization and geometric information provided by sparse landmarks. We established the well-posedness of the proposed model and characterized its connection to classical OT. Finite-element-based numerical algorithms were further developed, together with their convergence analysis. Synthetic and real shape-matching experiments demonstrated the effectiveness of the proposed coupled OT. 

While the proposed framework is general, the current finite-element-based numerical algorithms were primarily designed for low-dimensional spatial domains. Developing numerical methods for coupled OT for high-dimensional and large-scale problems is an important direction for future research. In particular, parameterizing the deformation field with neural networks may provide a flexible and scalable alternative to finite-element representations, potentially enabling coupled transport-deformation modeling for more complex data and applications.

\appendix
\section{Proof of Remark \ref{thm:prop_alpha_1}} 
\begin{proof}
    For convenience, we denote 
    $$R(\pi,u)=:\int h(T_{u}(x)-y) \dif\pi(x,y), \mbox{ and } C(\pi)=:\int c(x,y)\,\dif\pi(x,y).$$
    Based on Theorem~\ref{thm:exsitence}, we have 
    $ J(u_n)\leqslant C, \mbox{ and }R(\pi_n,u_n)\leqslant C,$
    for some $C>0$. According to the proof of Theorem~\ref{thm:exsitence}, up to a subsequence, $u_{\alpha_n}\to u^*$ strongly in $L^2(\mu;\mathbb R^p)$, and $\pi_{\alpha_n}\rightharpoonup \pi_\star $ narrowly in $\mathcal P(\Omega\times\Omega)$, where $(\pi^*,u^*)\in\Pi(\mu,\nu)\times \mathcal{U}$.
    It remains to identify the limit. By lower semicontinuity,
$$
\begin{aligned}
R(\pi^*,u^*) + J(u^*) \leqslant
\liminf_{n\to\infty}
R(\pi_n,u_n)
+
J(u_n).
\end{aligned}
$$
On the other hand, for any \((\pi,u)\in\Pi(\mu,\nu)\times\mathcal U\), the
minimality of \((\pi_n,u_n)\) gives
$$
\begin{aligned}
(1-\alpha_n)
c(\pi_n)
+
\alpha_n
R(\pi_n,u_n)
+
J(u_n)
\leqslant
(1-\alpha_n)
C(\pi)
+
\alpha_n
R(\pi,u)
+
J(u).
\end{aligned}
$$
Dropping the first nonnegative term on the left-hand side, we have
\[
\begin{aligned}
R(\pi_n,u_n)+J(u_n)
&=
\alpha_n R(\pi_n,u_n)+J(u_n)
+(1-\alpha_n)R(\pi_n,u_n) \\
&\leqslant
(1-\alpha_n)C(\pi)+\alpha_n R(\pi,u)+J(u)
+(1-\alpha_n)R(\pi_n,u_n).
\end{aligned}
\]
Since \(R(\pi_n,u_n)\) is uniformly bounded and \(\alpha_n\to 1\), we have
\[
\limsup_{n\to\infty}
\left[
R(\pi_n,u_n)+J(u_n)
\right]
\leqslant
R(\pi,u)+J(u).
\]
Therefore,
$$
R(\pi^*,u^*)
+
J(u^*)
\leqslant
R(\pi,u)
+
J(u)
$$
for every \((\pi,u)\in\Pi(\mu,\nu)\times\mathcal U\). Hence, we conclude the proof.
\end{proof}

\section*{Acknowledgments}
We would like to thank the referees for handling our paper.

\bibliographystyle{siamplain}
\bibliography{references_SIAP_final}
\end{document}

%% file: ex_shared.tex
\usepackage{lipsum}
\usepackage{amsfonts}
\usepackage{graphicx}
\usepackage{epstopdf}
\usepackage{algorithmic}
\ifpdf
  \DeclareGraphicsExtensions{.eps,.pdf,.png,.jpg}
\else
  \DeclareGraphicsExtensions{.eps}
\fi

\newsiamremark{remark}{Remark}
\newsiamremark{hypothesis}{Hypothesis}
\crefname{hypothesis}{Hypothesis}{Hypotheses}
\newsiamthm{claim}{Claim}
\newsiamremark{fact}{Fact}
\crefname{fact}{Fact}{Facts}

\headers{Coupled Optimal Transport}{X. Gu, J. Sun, and Z. Xu}

\title{Coupled Optimal Transport with \\ Landmark Constraints}

\author{Xiang Gu\thanks{School of Mathematics and Statistics, Xi'an Jiaotong University, China
  (\email{xianggu@xjtu.edu.cn, jiansun@xjtu.edu.cn, zbxu@xjtu.edu.cn}).}
\and Jian Sun  \footnotemark[1]
\and Zongben Xu\footnotemark[1]
}

\usepackage{amsopn}
